\newcolumntype{P}[1]{>{\centering\arraybackslash}p{#1}}
\newcolumntype{M}[1]{>{\centering\arraybackslash}m{#1}}
\DeclareMathOperator*{\Forall}{\scalerel*{\forall}{\sum}}
\newcommand{\BibTeX}{B\kern-.05em{\sc i\kern-.025em b}\kern-.08em\TeX}
\begin{document}
%% Comments
\newcommand{\marianela}[1]{\textcolor{orange}{#1}}
\newcommand{\alberto}[1]{\textcolor{blue}{#1}}
\newcommand{\giuseppe}[1]{\textcolor{violet}{#1}}
\newcommand{\ram}[1]{\textcolor{cyan}{#1}}
\newcommand{\daniel}[1]{\textcolor{red}{#1}}

%% Planning
\newcommand{\fluents}{\ensuremath{F}\xspace}
\newcommand{\actions}{\ensuremath{A}\xspace}
\newcommand{\relevantactions}{\ensuremath{A^\multipleplans}\xspace}
\newcommand{\init}{\ensuremath{I}\xspace}
\newcommand{\goal}{\ensuremath{G}\xspace}
\newcommand{\costfunction}{\ensuremath{C}\xspace}
\newcommand{\initcostfunction}{\ensuremath{\bar{C}}\xspace}
\newcommand{\cost}{\ensuremath{c}\xspace}
\newcommand{\initcost}{\ensuremath{\bar{c}}\xspace}
\newcommand{\task}{\ensuremath{{\cal P}}\xspace}
\newcommand{\plan}{\ensuremath{\pi}\xspace}
\newcommand{\plans}{\ensuremath{\Pi}\xspace}
\newcommand{\stripstask}{\ensuremath{\task=\langle \fluents, \actions, \init, \goal, \costfunction \rangle}\xspace}
\newcommand{\state}{\ensuremath{s}\xspace}
\newcommand{\allstates}{\ensuremath{\cal S}\xspace}
\newcommand{\reachablestates}{\ensuremath{{\cal S}_R}\xspace}
\newcommand{\action}{\ensuremath{a}\xspace}
\newcommand{\precondition}{\ensuremath{\textsc{pre}}\xspace}
\newcommand{\effects}{\ensuremath{\textsc{eff}}\xspace}
\newcommand{\addeffects}{\ensuremath{\textsc{add}}\xspace}
\newcommand{\deleffects}{\ensuremath{\textsc{del}}\xspace}
\newcommand{\actionapplication}{\ensuremath{\gamma}\xspace}
\newcommand{\planapplication}{\ensuremath{\Gamma}\xspace}
\newcommand{\multipletasks}{\ensuremath{{\cal T}}\xspace}
\newcommand{\multipleplans}{\ensuremath{{\cal M}}\xspace}

\newcommand{\taskc}{\ensuremath{{\cal P}^{\bar{C}}}\xspace}
\newcommand{\multipletasksc}{\ensuremath{{\cal T}^{\bar{C}}}\xspace}
\newcommand{\stripstaskc}{\ensuremath{\taskc=\langle \fluents, \actions, \init, \goal, \costfunction \rangle}\xspace}

%% Problem definition
\newcommand{\cfl}{\ensuremath{\textsc{cfl}}\xspace}
\newcommand{\mcf}{\ensuremath{\textsc{mcf}}\xspace}
\newcommand{\scf}{\ensuremath{\textsc{scf}}\xspace}
\newcommand{\icf}{\ensuremath{\textsc{icf}}\xspace}
\newcommand{\icfs}{\ensuremath{\textsc{icf}\text{s}}\xspace}
\newcommand{\cflc}{\ensuremath{\textsc{cfl}^{\bar{C}}}\xspace}
\newcommand{\mcfc}{\ensuremath{\textsc{mcf}^{\bar{C}}}\xspace}
\newcommand{\scfc}{\ensuremath{\textsc{scf}^{\bar{C}}}\xspace}

%% Algorithms
\newcommand{\approach}{\textsc{lacfip}\xspace}
\newcommand{\optimalapproach}[1]{\ensuremath{\approach^{#1}}\xspace}
\newcommand{\greedyapproach}{\ensuremath{\approach^g}\xspace}
\newcommand{\baseline}{\textsc{baseline}\xspace}

%%marianela stuff
%\theoremstyle{plain}
\newtheorem{definition}{Definition}
\newtheorem{theorem}{Theorem}
\newtheorem{proposition}{Proposition}
\newtheorem{lemma}{Lemma}
\newtheorem{corollary}{Corollary}
\newtheorem{example}{Example}
\newtheorem{remark}{Remark}
\theoremstyle{definition}
\newcommand{\alternativeplans}[1]{\ensuremath{\plans^{#1}}\xspace}
\newcommand{\alternativeplan}{\ensuremath{\plan^{\mathsf{a}}}\xspace}
\newcommand{\addaction}[3]{\mathsf{({#1}\mbox{ }{#2}\mbox{ }{#3})}}
\newcommand{\abs}[1]{\lvert#1\rvert}

%%%%%%%%%%%%%%%%%%%%%%%%%%%%%%%%%%%%%%%%%%%%%%%%%%%%%%%%%%%%%%%%%%%%%%%%

\begin{frontmatter}

%%% Use this command to specify your submission number.
%%% In doubleblind mode, it will be printed on the first page.

\paperid{2243} 

%%% Use this command to specify the title of your paper.

\title{On Learning Action Costs from Input Plans}

%%% Use this combinations of commands to specify all authors of your 
%%% paper. Use \fnms{} and \snm{} to indicate everyone's first names 
%%% and surname. This will help the publisher with indexing the 
%%% proceedings. Please use a reasonable approximation in case your 
%%% name does not neatly split into "first names" and "surname".
%%% Specifying your ORCID digital identifier is optional. 
%%% Use the \thanks{} command to indicate one or more corresponding 
%%% authors and their email address(es). If so desired, you can specify
%%% author contributions using the \footnote{} command.

\author{\fnms{Marianela}~\snm{Morales}\thanks{Corresponding Author. Email: marianela.moraleselena@jpmorgan.com}\footnote{Equal contribution.}}
\author{\fnms{Alberto}~\snm{Pozanco}\footnotemark}
\author{\fnms{Giuseppe}~\snm{Canonaco}} 
\author{\fnms{Sriram}~\snm{Gopalakrishnan}}
\author{\fnms{Daniel}~\snm{Borrajo}}
\author{\fnms{Manuela}~\snm{Veloso}}

\address{J.P. Morgan AI Research}

%%% Use this environment to include an abstract of your paper.

\begin{abstract}
Most of the work on learning action models focus on learning the actions' dynamics from input plans.
This allows us to specify the valid plans of a planning task.
However, very little work focuses on learning action costs, which in turn allows us to rank the different plans.
In this paper we introduce a new problem: that of learning the costs of a set of actions such that a set of input plans are optimal under the resulting planning model.
To solve this problem we present \optimalapproach{k}, an algorithm to learn action's costs from unlabeled input plans.
We provide theoretical and empirical results showing how \optimalapproach{k} can successfully solve this task.
\end{abstract}

\end{frontmatter}

%%%%%%%%%%%%%%%%%%%%%%%%%%%%%%%%%%%%%%%%%%%%%%%%%%%%%%%%%%%%%%%%%%%%%%%%

\section{Introduction}
Classical planning is the task of choosing and organizing a sequence of deterministic actions such that, when applied in a given initial state, it results in a goal state~\cite{DBLP:books/daglib/0014222}.
Most planning works assume the actions' dynamics or \emph{domain model}, i.e., how actions change the state, are provided as an input and turn the focus to the efficient synthesis of plans.
This is a strong assumption in many real-world planning applications, where domain modeling is often a challenging task~\cite{kambhampati2007model}.
Motivated by the difficulty of crafting action models, several works have tried to automatically learn domain models from input observations~\cite{yang2007learning,gregory2016domain,arora2018review,aineto2019learning,gragera2023planning}. 
Although they make different assumptions on the type of observations (full or partial plan, access to intermediate states, noisy observations, etc.), most works solely focus on learning the actions' dynamics but not their associated cost. 

In this paper we argue that learning action costs is as important as learning the actions' dynamics.
While actions' dynamics allow us to determine the \emph{validity} of traces in a domain model, the actions costs allow us to get the \emph{quality} of each of these traces, which is needed whenever we want to generate good plans.
Moreover, there are many real-world planning applications where the actions' dynamics are known, but their cost is either unknown and we aim to learn it from scratch; or approximate and we aim to refine it.
In both cases we can use data in the form of observed plans to acquire this knowledge.

Consider the case of a navigation tool that suggests routes to drivers.
In this domain the actions' dynamics are clear: cars can move through different roads and taking an action (i.e., taking an exit) will change the car's position.
The navigation tool will typically aim to generate the route with the shortest driving time, i.e., the least costly or optimal plan.
To do that, it makes some assumptions about the cost of each action (driving times): for example, being a function of the distance.
While this can be a good proxy, it can be further refined by observing the actual plans coming from users of the navigation tool.
By observing these plans we cannot only get more accurate driving times, but also understand which routes users prefer and adjust the model accordingly. 
Financial planning is yet another example where we have access to many plans, actions' dynamics are known, but properly estimating their cost for different people is crucial and challenging.
\citeauthor{pozanco2023combining} (\citeyear{pozanco2023combining}) aim to generate realistic financial plans by maximizing their likelihood.
To do that, they assign lower costs to more likely actions, i.e., saving $\$5$ in memberships is less costly than increasing the salary by $\$1,000$.
\citeauthor{pozanco2023combining} mention that these costs can be given or inferred from data but do not provide further details on how to do it.
Like in the navigation tool case, here we could gather observed plans on how users are saving and spending money to achieve their financial goals.
By doing this, we could more accurately assign costs to each action so as to generate plans that better align with user preferences.

In this paper we introduce a new problem: that of learning the costs of a set of actions such that a set of input plans are optimal under the resulting planning model.
We formally prove that this problem does not have a solution for an arbitrary set of input plans, and relax the problem to accept solutions where the number of input plans that are turned optimal is maximized.
We also present variations of this problem, where we guarantee input plans are the only optimal plans allowed; or we try to minimally modify an existing cost function instead of learning it from scratch.
We then introduce \optimalapproach{k}, a common algorithm to solve these tasks.
Empirical results across different planning domains show how \optimalapproach{k} can be used in practice to learn (or adapt) cost functions from unlabeled input plans.
We close the paper by drawing the connection between learning planning action costs and the related field of Inverse Reinforcement Learning~\cite{ng2000algorithms}, and by outlining some directions for future work.

\section{Preliminaries}

A classical planning task can be defined as follows:
\begin{definition}\label{def:strips-plan-task}
  A {\sc strips} \textbf{planning task}
can be defined as a tuple \stripstask, where \fluents is a set of fluents, \actions is a set of
 actions, $\init \subseteq \fluents$ is an initial state, $\goal\subseteq \fluents$ is a goal state, and $\costfunction: \actions \mapsto \mathbb{N}^+_1$ is the cost function that associates a cost to each action\footnote{We assume natural action costs w.l.o.g.}.  
\end{definition}

A state $\state \subseteq \fluents$ is a set of fluents that are true at a given time.
With \allstates we refer to all the possible states defined over \fluents. 
Each action $\action \in \actions$ is described by a set of preconditions $\precondition(\action)$, add effects $\addeffects(\action)$, delete effects $\deleffects(\action)$, and cost $\cost(\action)$.
An action \action is applicable in a state \state iff $\precondition(\action)\subseteq~s$.
We define the result of applying an action in a state as $\actionapplication(\state,\action)=(\state \setminus \deleffects(\action)) \cup \addeffects(\action)$.

A sequence of actions $\plan=(\action_1,\ldots,\action_n)$ is applicable in a state $\state_0$ if there are states $(\state_1.\ldots,\state_n)$ such that $\action_i$ is applicable in $\state_{i-1}$ and $\state_i=\actionapplication(\state_{i-1},\action_i)$.
The resulting state after applying a sequence of actions is $\planapplication(\state,\pi)=\state_n$, and $\cost(\plan) = \sum_{i=1}^{n} \cost(\action_i)$ denotes the cost of $\plan$.
A state $\state \in \allstates$ is reachable iff there exists a sequence of actions \plan applicable from \init such that $s \subseteq \planapplication(\init,\pi)$.
With $\reachablestates \subseteq \allstates$ we refer to the set of all reachable states of the planning task.
A sequence of actions is simple if it does not traverse the same state $\state \in \reachablestates$ more than once.

The solution to a planning task $\task$ is a plan, i.e., a sequence of actions $\plan$ such that $\goal \subseteq \planapplication(\init,\pi)$.
A plan $\plan_i$ is a subset of a plan $\plan_j$, denoted as $\plan_i\subset\plan_j$, iff the sequence of actions $\plan_i=(\action_1,\ldots,\action_n)$ is contained in the sequence of actions that conform $\plan_j$. 
We denote as $\Pi(\task)$ the set of all simple solution plans to planning task $\task$.
Also, given a plan \plan, we denote its alternatives, i.e., all the other sequence of actions that can solve \task as $\alternativeplans{\plan} = \Pi(\task) \setminus \pi$.
\begin{definition}
    A plan \plan is an \textbf{optimal plan} that solves \task iff its cost is lower or equal than that of the rest of alternative plans solving \task:
    \begin{equation}
       \small
        \cost(\plan) \leq \cost(\plan^\prime), \quad \Forall _{\plan^\prime \in \alternativeplans{\plan}}
    \end{equation}
    \label{def:optimal_plan}
\end{definition}
We will use the boolean function $\textsc{is\_optimal}(\plan,\task)$ to evaluate whether \plan optimally solves \task ($1$) or not ($0$).

\section{Learning Action Costs from Input Plans}
We are interested in learning the costs of a set of actions such that the input plans are optimal under the resulting planning model.
The underlying motivation is that by aligning the action's costs to the input plans, the new model will be able to generate new plans that better reflect the observed behavior, i.e., the user preferences.

Initially, we assume actions do not have an associated cost a priori.
To accommodate this, we extend the potential values that a cost function can have, to include empty values $\costfunction: \actions \mapsto \mathbb{N}^+_1 \cup \{\varnothing\}$.
We denote $\costfunction = \varnothing$ iff $\forall\action\in\actions,  \cost(\action) = \varnothing$. 
We then formally define a cost function learning task as follows:

\begin{definition}\label{def:cfl}
    A \textbf{cost function learning task} is a tuple $\cfl = \langle \multipletasks, \multipleplans \rangle$ where:
    \begin{itemize}
        \item $\multipletasks=(\task_1, \ldots, \task_n)$ is a sequence of planning tasks that share \fluents, \actions, and $\costfunction=\varnothing$.
        \item $\multipleplans=(\plan_1,\ldots,\plan_n)$ is a corresponding sequence of simple plans that solves \multipletasks.
    \end{itemize}
    The solution to a \cfl task is a common cost function \costfunction (common across all tasks and plans).
\end{definition}

Let us examine the problem definition.
We assume we have access to the full plan $\plan_i$ as well as the planning task $\task_i$ that it solves.
We do not impose any restriction on the planning tasks and plans: there could be repeated elements in \multipletasks or \multipleplans, and they could come from any distribution.
Unlike previous works on action's cost learning~\cite{gregory2016domain,garrido2023learning}, we do not require to know the total cost of each plan.
Moreover, we also assume that all the planning tasks and plans share the same vocabulary, i.e., they have a common set of fluents \fluents and actions \actions.
We also assume the common cost function \costfunction is initially unknown. 
%%
%Finally, we do not make any assumption on the initial and goal state of the input tasks and plans.

The above assumptions are not restrictive and hold in many real-world applications such as the ones described in the Introduction.
For example, in the navigation scenario \fluents and \actions will remain constant as long as the city network (map) does not change, which will only occur when a new road is built.
In this domain we will get input plans with different starting points (\init) and destinations (\goal), which is supported by our problem definition.
These plans may come from users who share similar preferences, such as favoring the shortest routes, or from users with mixed preferences, where some prefer the shortest routes while others opt for scenic routes.
One could argue that in the navigation scenario it is trivial to annotate each plan with its actual duration.
While this might be true, it is clearly not so for other applications where action's cost capture probabilities or user preferences, such as  financial planning. 

Going back to Definition~\ref{def:cfl}, we purposely left open the characterization of a \cfl solution, only restricting it to be a common cost function \costfunction shared by all the input tasks.
We did this because we are interested in defining different solution concepts depending on the properties the cost function \costfunction should have.
In the next subsections we formalize different solutions to cost function learning tasks.

%%%%%%
\subsection{Turning All the Input Plans Optimal}
The first objective we turn our attention to is trying to find a common cost function under which all the input plans are optimal.
We refer to such solutions as Ideal Cost Functions.
\begin{definition}
    Given a \cfl task, we define an \textbf{ideal cost function} \icf that solves it as a common cost function \costfunction under which all the plans in \multipleplans are optimal. The quality of an ideal cost function is defined as follows: 
    \begin{equation}
        \small
        \min \sum_{\action \in \actions} \cost(\action),
        \label{eq:solution_quality2}
    \end{equation}
    \vspace{-0.3cm}
    \begin{equation}
        \small
        \text{s.t.} \sum_{i \in \multipletasks,\multipleplans} \textsc{is\_optimal}(\plan_i,\task_i=\langle F,A,I_i,G_i,C\rangle) = \abs{\multipleplans},
        \label{eq:solution_quality1}
    \end{equation}
    An \icf is optimal iff no other cost function $\costfunction^\prime$ yields a lower value in Equation~(\ref{eq:solution_quality2}) while satisfying Constraint~(\ref{eq:solution_quality1}).
\end{definition}

\begin{remark}\label{rem:zero-costs}
    Note that we choose non-zero action costs in the definition of a \cfl task %(Definition~\ref{def:strips-plan-task}), 
    to avoid trivializing the problem, as a cost function $\costfunction$ assigning zero to all actions could otherwise be an \icf.
\end{remark}

Ideal Cost Functions are not guaranteed to exist for arbitrary \cfl tasks, since Constraint (\ref{eq:solution_quality1}) cannot always be satisfied.
The difficulty lies in the inter dependencies and potential conflicts between plans. 
In particular, this not only refers to the actions shared between the input plans, but also to the multiple alternative plans that an input plan can have, i.e., all the other sequence of actions that can achieve \goal from \init. 

\begin{remark}
    Note that since we are considering only simple plans, the set of alternative plans $\alternativeplans{\plan_i}$ that a plan $\plan_i \in \multipleplans$ can have is finite. 
\end{remark}

We can, now, show that there is no guarantee that an \icf solution exists for a \cfl task.

\begin{theorem}\label{thm:no-cost-function}
Given a \cfl task, it is not guaranteed that there exists an \icf solution. 
\end{theorem}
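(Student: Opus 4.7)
The strategy is a proof by counterexample: I would exhibit a single \cfl task for which no common cost function in $\mathbb{N}^+_1$ can make the input plan optimal. The leverage point is that the codomain of $\costfunction$ enforces $\cost(\action) \geq 1$ for every action, so a plan that contains strictly more actions than a valid alternative must have strictly greater total cost and therefore cannot be optimal.

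I would construct a minimal witness with one task whose input plan admits a proper action-subsequence that already solves the task. Concretely, take $\fluents = \{p, g, r\}$ and $\actions = \{a, b\}$ with $\precondition(a) = \{p\}$, $\addeffects(a) = \{g\}$ and $\precondition(b) = \{p\}$, $\addeffects(b) = \{r\}$, both with empty delete effects. For the task $\task$ with $\init = \{p\}$ and $\goal = \{g\}$, the input plan $\plan = (a, b)$ traverses the three distinct states $\{p\} \to \{p,g\} \to \{p,g,r\}$, hence is a simple solution of $\task$. At the same time, $(a) \in \alternativeplans{\plan}$ is also a simple solution, since applying $a$ from $\init$ already yields a state containing $\goal$.

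By Definition~\ref{def:optimal_plan}, $\plan$ is optimal only if $\cost(a) + \cost(b) \leq \cost(a)$, which rearranges to $\cost(b) \leq 0$ and contradicts $\cost(b) \in \mathbb{N}^+_1$. Hence no \icf exists for the \cfl task $\langle (\task), (\plan) \rangle$, which proves the theorem.

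The main design subtlety, rather than a genuine obstacle, is ensuring that the constructed plan $\plan$ remains simple (no repeated reachable state) while still admitting a strictly shorter alternative; the auxiliary fluent $r$ and action $b$ are introduced precisely to reconcile these two requirements. A more elaborate counterexample could instead use several tasks whose optimality constraints jointly force contradictory inequalities between action costs, but a single-task witness is the most economical way to establish the non-existence result.
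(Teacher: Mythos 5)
Your proof is correct, but it takes a genuinely different route from the paper. You exhibit a single-task, single-plan counterexample in which the input plan contains a redundant action: since the codomain of \costfunction forces $\cost(\action)\geq 1$, a plan admitting a strictly shorter alternative (here the prefix $(a)$ of $(a,b)$) can never satisfy Definition~\ref{def:optimal_plan}, so no \icf exists. The construction checks out against the paper's definitions: $(a,b)$ is simple and solves the task, and $(a)$ is a simple alternative in $\alternativeplans{\plan}$, so the inequality $\cost(a)+\cost(b)\leq\cost(a)$ is indeed required and is unsatisfiable over $\mathbb{N}^+_1$. The paper instead proves the theorem via a two-plan conflict: it assumes alternative plans $\alternativeplan_i\subset\plan_j$ and $\alternativeplan_j\subset\plan_i$ and derives a cyclic chain of strict inequalities (instantiated by the triangle example), showing that even when each input plan could be made optimal in isolation, the joint constraints of several plans can be mutually inconsistent. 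Your argument is more economical as a witness of non-existence, but it rests on a degenerate input plan; the obstruction you use is exactly the redundant-action phenomenon the paper isolates later (Example~\ref{ex:redundant}) when discussing \mcf. The paper's choice of proof buys the stronger message that inter-plan dependencies alone can preclude an \icf, which is what motivates relaxing to \mcf rather than merely filtering out redundant plans.
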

\begin{proof}
Given a $\cfl = \langle \multipletasks, \multipleplans \rangle$ where $\multipletasks =(\task_i, \task_j)$,
and $\multipleplans~=~(\plan_i,\plan_j)$, let us assume that an \icf solution exists. Let us also assume that there are alternative plans $\alternativeplan_i \in \alternativeplans{\plan_i}$ and $\alternativeplan_j \in \alternativeplans{\plan_j}$ for $\plan_i$ and $\plan_j$ respectively, such that $\alternativeplan_i\subset\plan_j$ and $\alternativeplan_j\subset\plan_i$. %We also assume that there exists an \icf.
Since $\plan_i$ is optimal, from Definition~\ref{def:optimal_plan} we have:
\begin{equation}\label{eq:ti-ai}
    \small
    \cost(\plan_i) \leq \cost(\alternativeplan_i)
\end{equation}

From the assumption $\alternativeplan_i \subset \plan_j$, there is at least one more action in $\plan_j$ than in $\alternativeplan_i$. Moreover, since the minimum cost for each action is $1$, then:
\begin{equation}\label{eq:ai-tj}
    \small
    \cost(\alternativeplan_i)< \cost(\plan_j)
\end{equation}
Since $\plan_j$ is also optimal, again by Definition~\ref{def:optimal_plan}: 
\begin{equation}\label{eq:tj-aj}
    \small
    \cost(\plan_j)\leq\cost(\alternativeplan_j)
\end{equation}
From~(\ref{eq:ti-ai}), (\ref{eq:ai-tj}) and (\ref{eq:tj-aj}) we have
\begin{equation}\label{eq:all}
    \small
    \cost(\plan_i)\leq\cost(\alternativeplan_i) < \cost(\plan_j)\leq\cost(\alternativeplan_j)
\end{equation}

However, from $\alternativeplan_j \subset \plan_i$, as the case above, there are at least one more action in $\plan_i$, we have
\begin{equation}\label{eq:aj-ti}
    \small
\cost(\alternativeplan_j) < \cost(\plan_i)
\end{equation}

From~(\ref{eq:all}) and (\ref{eq:aj-ti}) we get a contradiction from the assumption that there exists an \icf.
\end{proof}

Observe that the theorem would not hold if zero were allowed as the minimum cost for an action (see Remark~\ref{rem:zero-costs}). %, as this would lead to a trivial solution.
We illustrate this result by the following example:

\begin{example}
Let $\multipletasks =(\task_1, \task_2)$, %that share \fluents and \actions,
and let $\multipleplans~=~(\plan_1,\plan_2)$ where $\plan_1=  [\addaction{move}{A}{C}, \addaction{move}{C}{B}]$ and $\plan_2=[\addaction{move}{A}{B}, \addaction{move}{B}{C}]$. These plans are displayed in the Figure below, where \textcolor{orange}{$\plan_1$} is represented by the \textcolor{orange}{orange} arrows, and \textcolor{blue}{$\plan_2$} by the \textcolor{blue}{blue} ones. 
\vspace{-2mm}
\begin{center}
\scalebox{0.55}{
\begin{tikzpicture}
\coordinate (A) at (0,0);
\coordinate (B) at (4,0);
\coordinate (C) at (2,3);

\draw (A) -- (B) -- (C) -- cycle;

\draw[blue, thick, ->] (A) -- ++(0.13,0.13) -- ++(3.6,0);
\draw[blue, thick, ->] ($(B)!0.07!(C)+ (-0.09,-0.09)$) -- ($(C)!0.07!(B)+ (-0.09,-0.09)$);

\draw[orange, thick, ->] (A) -- ++(-0.1,0.1) -- ++(2,3);
\draw[orange, thick, ->] ($(C)!0.0!(B)+ (0.09,0.09)$) -- ($(B)!0.07!(C)+ (0.09,0.09)$);

\draw[fill=white] (A) circle [radius=0.3] node {A};
\draw[fill=white] (B) circle [radius=0.3] node {B};
\draw[fill=white] (C) circle [radius=0.3] node {C};

\node at (0.8,2) {$1$};
\node at (3.1,2) {$1$};
\node at (2,-0.3) {$2$};
\end{tikzpicture}}
\end{center}

Let us assume that there is a \icf solution, i.e. there exists a cost function \costfunction under which all plans in \multipleplans are optimal.

Suppose the cost function \costfunction assigns the minimum costs for the actions that formed \textcolor{orange}{$\plan_1$}: $\cost\addaction{move}{A}{C} = 1$ and $\cost\addaction{move}{C}{B} = 1$. Then, we have $\cost(\textcolor{orange}{\plan_1}) = 2$. Since \textcolor{orange}{$\plan_1$} is optimal, the cost function \costfunction has to assign greater or equal costs to its alternative plans. Thus, $\cost\addaction{move}{A}{B} = 2$.

On the other hand, since we assume \costfunction exists, then \textcolor{blue}{$\plan_2$} is also optimal. The cost function \costfunction assigns the minimum costs to its actions, however, we already have $\cost\addaction{move}{A}{B}~=~2$. We then assign $\cost\addaction{move}{B}{C} = 1$. Therefore, we have $\cost(\textcolor{blue}{\plan_2}) = 3$. But there exists an alternative plan for \textcolor{blue}{$\plan_2$}, which is: $\cost\addaction{move}{A}{C} = 1$. Then, \textcolor{blue}{$\plan_2$} is not optimal. A similar conclusion is reached if we start with \textcolor{orange}{$\plan_1$}.
Thus, there is no \icf solution 
that guarantees all plans in \multipleplans are optimal.
\end{example}

\begin{remark}
    \label{remark_actions}
    Observe that the optimality of a plan $\plan_i \in \multipleplans$ only depends on the costs of the actions $(\action_i,\ldots, \action_n) \in A$ occurring in $\plan_i$ or in $\alternativeplans{\plan_i}$. The costs assigned to the rest of the actions do not affect $\plan_i$'s optimality.
\end{remark}

\subsection{Maximizing the Optimal Input Plans}
Given that \icfs 
are not guaranteed to exist for every \cfl task, we now relax the solution concept and focus on cost functions that maximize the number of plans turned optimal. We refer to such solutions as Maximal Cost Functions.

\begin{definition}
    Given a \cfl task, we define a \textbf{maximal cost function} \mcf that solves it as a common cost function \costfunction under which a maximum number of plans in \multipleplans are optimal. We formally establish the quality of a maximal cost function as follows:
    \begin{equation}
        \small
        \max \sum_{i \in \multipletasks,\multipleplans} \textsc{is\_optimal}(\plan_i,\task_i=\langle F,A,I_i,G_i,C\rangle),
        \label{eq:c_quality1}
    \end{equation}
    \vspace{-0.5cm}
    \begin{equation}
        \small
        \min \sum_{\action \in \actions} \cost(\action),
        \label{eq:c_quality2}
    \end{equation}
    A maximal cost function \mcf is optimal iff no other cost function $\costfunction^\prime$ yields a higher value in Equation~(\ref{eq:c_quality1}), and, if tied, a lower value in Equation~(\ref{eq:c_quality2}). 
    \label{def:mcfl}
\end{definition}

While \icfs are not guaranteed to exist, it is easy to show that there is always a \mcf that solves a \cfl task.
This is because even a cost function under which none of the plans \multipleplans are optimal would be a valid \mcf solution.
One might think that we can go one step further and ensure that there is a trivial cost function \costfunction guaranteeing that at least one of the plans in \multipleplans will be optimal in the resulting model. This trivial cost function would consist on assigning a cost $k$ to all the actions in one of the plans $\plan_i \in \multipleplans$, and a cost $k+|\reachablestates|$ to the rest of the actions in \actions. Unfortunately, this is not always the case. 
In particular, if the input plan \plan contains \emph{redundant actions}~\cite{nebel1997ignoring,salerno2023eliminating}, i.e., actions that can be removed without invalidating the plan, then it is not possible to turn \plan optimal.
Example~\ref{ex:redundant} illustrates this case.

\begin{example}
\label{ex:redundant}
Let $\multipletasks = (\task_1$) represented in the Figure below, where the initial state is displayed by the set of blocks on the left, and the goal state by the blocks on the right. Let $\multipleplans = (\plan_1$) be the plan that solves $\task_1$, where $\plan_1 = [\mathsf{unstack}(D,C), \mathsf{putdown}(D),\mathsf{unstack}(B, A), \mathsf{putdown}(B),\break \mathsf{pickup}(A), \mathsf{stack}(A,B)]$.

\begin{center}
\scalebox{0.7}{
\begin{tikzpicture}
    % Table
    \draw[line width=0.5mm] (-0.5,0)--(3,0);
    
    % Initial state 
    \draw (0,0) rectangle (1,1) node[pos=.5] {C};
    \draw (0,1) rectangle (1,2) node[pos=.5] {D}; 
    \draw (1.5,0) rectangle (2.5,1) node[pos=.5] {A};
    \draw (1.5,1) rectangle (2.5,2) node[pos=.5] {B};
    
    %$\rightsquigarrow$
    % Arrow
    \draw[->,line width=0.2mm] (3.2,1)--(4.2,1);
    
    % Table 2
    \draw[line width=0.5mm] (4.7,0)--(6.3,0);
    \draw (5,0) rectangle (6,1) node[pos=.5] {B};
    \draw (5,1) rectangle (6,2) node[pos=.5] {A};
\end{tikzpicture}}
\end{center}
Let $\alternativeplan_1  = [(\mathsf{unstack}(B, A), \mathsf{putdown}(B), \mathsf{pickup}(A),\break \mathsf{stack}(A,B))]$ be an alternative plan. For $\plan_1$ to be optimal, its action costs must be lower than or equal to those of $\alternativeplan_1$. However, this is not possible because in $\plan_1$ there are redundant actions and there is no other action in $\alternativeplan_1$ to which we can assign a higher cost, thereby making the total cost of $\alternativeplan_1$ lower than that of $\plan_1$. Therefore, there is no cost function \costfunction such that $\plan_1$ is optimal.
\end{example}

Although in some extreme scenarios no input plan can be turned optimal, this is not the general case as we will see. %later.

%%%%%%
\subsection{Making Input the Only Optimal Plans}
Up to now we have explored the problem of turning a set of input plans optimal.
We have showed that this task does not always have a solution when we want to make all the plans in \multipleplans optimal (\icf), and therefore, we focused on maximizing the number of plans that are optimal under the resulting cost function (\mcf).
These solutions still allow for the existence of other alternative plans with the same cost.
In some cases, we might be interested in a more restrictive setting, where \emph{only} the input plans are optimal.
In other words, there is no optimal plan $\plan_i$ such that $\plan_i \not\in \multipleplans$.
This can be the case of applications where we are interested in generating optimal plans that perfectly align with the user preferences, preventing the model from generating optimal plans outside of the observed behavior.
For this new solution we are still focusing on maximizing the number of plans in \multipleplans that are optimal, but now we want them to be the only optimal ones. We formally define this new solution as follows:

\begin{definition}[Strict Cost Function]\label{def:scfl}
    Given a \cfl task, we define a strict cost function \scf as a common cost function \costfunction under which a maximum number of plans in \multipleplans are the only optimal plans.
\end{definition}

The quality of a strict cost function solving a \cfl task is defined similarly to \mcf (Definition~\ref{def:mcfl}), but differs in Definition~\ref{def:optimal_plan}, which defines an optimal plan (used in the boolean function $\textsc{is\_optimal}(\plan,\task)$ in Equation (\ref{eq:c_quality1})).
The condition for a plan's cost being lower than or equal to($\leq$) its alternatives  is replaced by being strictly lower ($<$) in the strict approach. 
Similarly to \mcf, there is always a \scf that solves a \cfl task, and we cannot guarantee the existence of an ideal cost function for \scf.
In particular, Theorem~\ref{thm:no-cost-function} also applies to the \scf solution, with the exception that the definition of an optimal plan has changed as specified earlier.

%%%%%%
\subsection{Adapting an Existing Cost Function}
In practice, we may have an approximate cost function \initcostfunction that we would like to refine with observed plans, rather than learning it from scratch as previously focused. In other words, the cost function \costfunction in the sequence of planning task \multipletasks is not empty as in Definition~\ref{def:cfl}. We then introduce a new task with initial costs as follows:

\begin{definition}\label{def:cflc}
    A \textbf{cost function refinement task} is a tuple $\cflc = \langle \multipletasks, \multipleplans \rangle$ where:
    \begin{itemize}
        \item $\multipletasks=(\task_1, \ldots, \task_n)$ is a sequence of planning tasks that share \fluents, \actions, and $\costfunction=\initcostfunction$.
        \item $\multipleplans=(\plan_1,\ldots,\plan_n)$ is a corresponding sequence of simple plans that solves \multipletasks.
    \end{itemize}
    The solution to a \cflc task is a common cost function \costfunction.
\end{definition}

We denote with \mcfc and \scfc when we have \mcf and \scf solutions respectively, but for a \cflc task.
We then formally redefine the quality of \mcfc and \scfc solutions by slightly modifying Definition~\ref{def:mcfl}.
In this case, we change the secondary objective of minimizing the sum of actions' costs (Equation (\ref{eq:c_quality2})) to minimizing the difference between the solution cost function \costfunction and the approximate cost function received as input \initcostfunction (Equation (\ref{eq:initc-quality2}) below):
\begin{equation}
        \small
        \min \sum_{\action \in \actions} \abs{\cost(\action)- \initcost(\action)}
        \label{eq:initc-quality2}
    \end{equation}
where $\initcost(\action)$ refers to the cost of each action given by the initial cost function \initcostfunction.
As before, it is easy to see that \mcfc (\scfc) solutions have the same properties as their \mcf (\scf) counterparts, i.e., there is always a cost function \costfunction that solves \cflc task, but we cannot guarantee the existence of a \costfunction that makes all the plans in \multipleplans optimal.

%%%%%%
\subsection{Summary}
Let us summarize the different tasks we have presented so far (and their solutions) with the example illustrated in Table~\ref{tab:example-cost}.
The first row of the table shows the \cfl and \cflc tasks containing $6$ states labeled with letters.
The actions, depicted with edges, consist on moving between two connected states.
There are two input plans $\multipleplans=(\plan_1,\plan_2)$ such that \textcolor{teal}{$\plan_1$} $= [\addaction{move}{A}{B}, \addaction{move}{B}{D}]$ and \textcolor{purple}{$\plan_2$}$= [\addaction{move}{A}{C}, \addaction{move}{C}{E},\addaction{move}{E}{F}]$.
Colored states indicate they are an initial or goal state for one of the input plans.

The next rows depict optimal solutions for the two tasks.
The \mcf solution ensures both input plans are optimal, while assigning the minimum cost to each action.  %(Definition~\ref{def:mcfl}).
The \scf solution guarantees that the two input plans are the only optimal alternatives under the returned cost function \costfunction. 
That solution would also turn all the plans optimal under the \mcf definition, but would be suboptimal as it has a higher sum of action's costs.
This is because the cost of $\addaction{move}{C}{D}$ and $\addaction{move}{D}{F}$ need to be increased from $1$ to $2$ in order to force that there are no optimal plans outside \multipleplans.
On the right side, the approximate cost function \initcostfunction is minimally modified to guarantee that the input plans are optimal. 
For \mcfc, this is achieved by reducing by one the cost of $\addaction{move}{A}{C}$ and $\addaction{move}{E}{F}$.
For \scfc, we need to decrease the cost of $\addaction{move}{C}{E}$ and $\addaction{move}{E}{F}$, and increase the cost of $\addaction{move}{D}{F}$ to $2$ so as to force the stricter \scfc requirement.

\begin{table*}[t]
    \centering
    \begin{tabular}{|M{2cm}|M{4.3cm}||M{4.3cm}|M{2cm}|}
        \hline
        \cfl task
        %Input $\costfunction=\emptyset$
         &
         
        \scalebox{0.5}{\begin{tikzpicture}
            %Graph input 1
            \node[circle, draw, minimum size=0.6cm, path picture={\fill[teal!70] (path picture bounding box.south west) rectangle (path picture bounding box.north); \fill[purple!70] (path picture bounding box.south) rectangle (path picture bounding box.north east);}] (A) at (0,0) {A};
            \node[circle, draw, minimum size=0.6cm] (B) at (0,-2) {B};
            \node[circle, draw, minimum size=0.6cm] (C) at (2,0) {C};
            \node[fill=teal!70,circle, draw, minimum size=0.6cm] (D) at (2,-2) {D};
            \node[circle, draw, minimum size=0.6cm] (E) at (4,0) {E};
            \node[fill=purple!70, circle, draw, minimum size=0.6cm] (F) at (4,-2) {F};
            
            \draw (A)--(B)--(D)--(C)--cycle;
            \draw (C)--(E)--(F)--(D)--cycle;

            \draw[purple, thick] (A)--(C);
            \draw[purple, thick] (C)--(E);
            \draw[purple, thick] (E)--(F);
            \draw[teal, thick] (B)--(A);
            \draw[teal, thick] (D)--(B);
        \end{tikzpicture} }
        &
        \scalebox{0.5}{\begin{tikzpicture}
            %Graph input 2
            \node[circle, draw, minimum size=0.6cm, path picture={\fill[teal!70] (path picture bounding box.south west) rectangle (path picture bounding box.north); \fill[purple!70] (path picture bounding box.south) rectangle (path picture bounding box.north east);}] (A) at (0,0) {A};
            \node[circle, draw, minimum size=0.6cm] (B) at (0,-2) {B};
            \node[circle, draw, minimum size=0.6cm] (C) at (2,0) {C};
            \node[fill=teal!70,circle, draw, minimum size=0.6cm] (D) at (2,-2) {D};
            \node[circle, draw, minimum size=0.6cm] (E) at (4,0) {E};
            \node[fill=purple!70,circle, draw, minimum size=0.6cm] (F) at (4,-2) {F};
            
            \draw (A)--(B)--(D)--(C)--cycle;
            \draw (C)--(E)--(F)--(D)--cycle;

            \draw[purple, thick] (A)--(C);
            \draw[purple, thick] (C)--(E);
            \draw[purple, thick] (E)--(F);
            \draw[teal, thick] (B)--(A);
            \draw[teal, thick] (D)--(B);

            \node at (1,0.2) {2};
            \node at (1,-2.2) {1};
            \node at (-0.2,-1) {2};
            \node at (3,0.2) {2};
            \node at (3,-2.2) {1};
            \node at (4.2,-1) {2};
            \node at (2.2,-1) {2}; 
        \end{tikzpicture} }
        &
        \cflc task
        %Input $\costfunction=\initcostfunction$
        \\
        \hline
        \hline
        \mcf 
        &
        \scalebox{0.5}{\begin{tikzpicture}
            % First graph CFL
            \node[circle, draw, minimum size=0.6cm] (A) at (0,0) {A};
            \node[circle, draw, minimum size=0.6cm] (B) at (0,-2) {B};
            \node[circle, draw, minimum size=0.6cm] (C) at (2,0) {C};
            \node[circle, draw, minimum size=0.6cm] (D) at (2,-2) {D};
            \node[circle, draw, minimum size=0.6cm] (E) at (4,0) {E};
            \node[circle, draw, minimum size=0.6cm] (F) at (4,-2) {F};

            \draw (A)--(B)--(D)--(C)--cycle;
            \draw (C)--(E)--(F)--(D)--cycle;
    
            \draw (A)--(C);
            \draw (C)--(E);
            \draw (B)--(A);
            \draw (D)--(B);
    
            \node at (1,0.2) {1};
            \node at (1,-2.2) {1};
            \node at (-0.2,-1) {1};
            \node at (3,0.2) {1};
            \node at (3,-2.2) {1};
            \node at (4.2,-1) {1};
            \node at (2.2,-1) {1}; 
        \end{tikzpicture} } 
         &
         \scalebox{0.5}{\begin{tikzpicture}
            % Second graph CFLc
            \node[circle, draw, minimum size=0.6cm] (A) at (0,0) {A};
            \node[circle, draw, minimum size=0.6cm] (B) at (0,-2) {B};
            \node[circle, draw, minimum size=0.6cm] (C) at (2,0) {C};
            \node[circle, draw, minimum size=0.6cm] (D) at (2,-2) {D};
            \node[circle, draw, minimum size=0.6cm] (E) at (4,0) {E};
            \node[circle, draw, minimum size=0.6cm] (F) at (4,-2) {F};
            
            \draw (A)--(B)--(D)--(C)--cycle;
            \draw (C)--(E)--(F)--(D)--cycle;
    
            \draw (A)--(C);
            \draw (C)--(E);
            \draw (B)--(A);
            \draw (D)--(B);
    
            \node at (1,0.2) {1};
            \node at (1,-2.2) {1};
            \node at (-0.2,-1) {2};
            \node at (3,0.2) {2};
            \node at (3,-2.2) {1};
            \node at (4.2,-1) {1};
            \node at (2.2,-1) {2}; 
        \end{tikzpicture} } 
         &
         \mcfc 
         \\
         \hline
         \begin{tikzpicture}
            \node at (1,0.2) {\scf };
         \end{tikzpicture}
          &
        \scalebox{0.5}{\begin{tikzpicture}
            % First graph SCFL
            \node[circle, draw, minimum size=0.6cm] (A) at (0,0) {A};
            \node[circle, draw, minimum size=0.6cm] (B) at (0,-2) {B};
            \node[circle, draw, minimum size=0.6cm] (C) at (2,0) {C};
            \node[circle, draw, minimum size=0.6cm] (D) at (2,-2) {D};
            \node[circle, draw, minimum size=0.6cm] (E) at (4,0) {E};
            \node[circle, draw, minimum size=0.6cm] (F) at (4,-2) {F};
            
            \draw (A)--(B)--(D)--(C)--cycle;
            \draw (C)--(E)--(F)--(D)--cycle;
    
            \draw (A)--(C);
            \draw (C)--(E);
            \draw (B)--(A);
            \draw (D)--(B);
    
            \node at (1,0.2) {1};
            \node at (1,-2.2) {1};
            \node at (-0.2,-1) {1};
            \node at (3,0.2) {1};
            \node at (3,-2.2) {2};
            \node at (4.2,-1) {1};
            \node at (2.2,-1) {2}; 
        \end{tikzpicture} } 
         &
          \scalebox{0.5}{\begin{tikzpicture}
            % First graph SCFLc
            \node[circle, draw, minimum size=0.6cm] (A) at (0,0) {A};
            \node[circle, draw, minimum size=0.6cm] (B) at (0,-2) {B};
            \node[circle, draw, minimum size=0.6cm] (C) at (2,0) {C};
            \node[circle, draw, minimum size=0.6cm] (D) at (2,-2) {D};
            \node[circle, draw, minimum size=0.6cm] (E) at (4,0) {E};
            \node[circle, draw, minimum size=0.6cm] (F) at (4,-2) {F};
            
            \draw (A)--(B)--(D)--(C)--cycle;
            \draw (C)--(E)--(F)--(D)--cycle;
    
            \draw (A)--(C);
            \draw (C)--(E);
            \draw (B)--(A);
            \draw (D)--(B);
    
            \node at (1,0.2) {2};
            \node at (1,-2.2) {1};
            \node at (-0.2,-1) {2};
            \node at (3,0.2) {1};
            \node at (3,-2.2) {2};
            \node at (4.2,-1) {1};
            \node at (2.2,-1) {2}; 
        \end{tikzpicture} }
        &
        \scfc 
        \\
        \hline
    \end{tabular}
    \vspace{2mm}
    \caption{Optimal solutions \mcf, \scf, \mcfc and \scfc for \cfl and \cflc tasks. Letters represent the states of the planning tasks, with edges representing the actions. Input plans \textcolor{teal}{$\plan_1$} and \textcolor{purple}{$\plan_2$} are depicted in teal and purple, respectively.
    }
    \vspace{4mm}
    \label{tab:example-cost}
\end{table*}

\section{Solving Cost Function Learning Tasks}
Algorithm 1 describes \optimalapproach{k}, an algorithm to learn action's costs from input plans.
It receives the cost function learning (refinement) task to solve ($T$), the desired solution ($S$), and a parameter $k$ that determines the number of alternative plans to be computed for each plan $\plan_i \in \multipleplans$.
Higher $k$ values indicate a higher percentage of $\alternativeplans{\plan_i}$ is covered, with $k=\infty$ meaning that the whole set of alternative plans is computed.
\optimalapproach{k} first generates a Mixed-Integer Linear Program (MILP) to assign costs to actions, i.e., to compute the solution cost function \costfunction.
This MILP is shown below (Equations~\ref{milp:objective_function}-\ref{milp:z_domain}) to compute \mcf solutions for \cfl tasks.
MILPs for the other solutions and tasks are similar and can be found in the Appendix.

\begin{algorithm}
\small
\caption{\optimalapproach{k}}
\label{alg:lacfipk}
\renewcommand{\algorithmicrequire}{\textbf{Input:}}
\renewcommand{\algorithmicensure}{\textbf{Output:}}
    \begin{algorithmic}[1]
    \Require Task $T$, Solution concept $S$, \# of alternatives $k$
    \Ensure Cost function \costfunction
    
    \State $\relevantactions \gets \textsc{getRelevantActions}(\actions, \multipleplans)$
    
    \State $\textsc{MILP} \gets \textsc{generateMILP}(T, S, \relevantactions, k)$
    
    \State$C, Q \gets \textsc{solve}(\textsc{MILP}_1, \omega_1=1, \omega_2=0)$
    
   \State $\mbox{constraint} \gets \sum_{\plan \in \multipleplans} x_\plan = Q$
    
    \State $\textsc{MILP} \gets \textsc{addConstraint}(\textsc{MILP}, \mbox{constraint})$
    
    \State $C, Q \gets \textsc{solve}(\textsc{MILP}, \omega_1=0, \omega_2=1)$
    
    \State $C \gets \textsc{updateCostFunction}(\costfunction, \actions, \relevantactions)$
    
    \State \Return{$C$}
    \end{algorithmic}
\end{algorithm}

\vspace{-5mm}
\begin{equation}
    \small
    \label{milp:objective_function}
    \text{maximize} \quad \omega_1 \displaystyle\sum\limits_{\plan \in \multipleplans} x_{\plan} - \omega_2 \displaystyle\sum\limits_{\action \in \relevantactions} y_{\action}
\end{equation}
\begin{equation}
    \small
    \label{milp:plan_vs_alternative_optimality_constraint}
    \text{s.t.} \displaystyle\sum\limits_{\action \in \plan}  y_{\action}  \leq \displaystyle\sum_{\action \in \alternativeplan} y_{\action} + M(1-z_{\plan,\alternativeplan}),  \plan \in \multipleplans, \alternativeplan \in \alternativeplans{\plan}
\end{equation}
\begin{equation}
    \small
    \label{milp:plan_optimality_constraint}
    |\alternativeplans{\plan}| - \displaystyle\sum_{\alternativeplan \in \alternativeplans{\plan}} z_{\plan,\alternativeplan}   \leq M(1-x_{\plan}),  \plan \in \multipleplans
\end{equation}
\begin{equation}
    \small
    \label{milp:x_domain}
    x_{\plan}            \in \{0,1\}, \plan \in \multipleplans
\end{equation}
\begin{equation}
    \small
    \label{milp:y_domain}
    y_{\action}        \geq 1, \action \in \relevantactions
\end{equation}
\begin{equation}
    \small
    \label{milp:z_domain}
     z_{\plan,\alternativeplan}        \in \{0,1\}, \plan \in \multipleplans, \alternativeplan \in \alternativeplans{\plan}
\end{equation}

We have three sets of decision variables.
The first, $x_\plan$, are binary decision variables that will take a value of $1$ if plan \plan is optimal in the resulting domain model, and $0$ otherwise (Equation (\ref{milp:x_domain})).
The second, $z_{\plan,\alternativeplan}$ are binary decision variables that will take a value of $1$ if plan \plan has a lower or equal cost than alternative plan \alternativeplan, and $0$ otherwise (Equation (\ref{milp:z_domain})).
Finally, $y_\action$ are integer decision variables that assign a cost to each action $\action \in \relevantactions$.
This subset of actions $\relevantactions \subseteq \actions$ represents the actions which cost needs to be set in order to make the plans in \multipleplans optimal (see Remark~\ref{remark_actions}).
Constraint (\ref{milp:plan_vs_alternative_optimality_constraint}) enforces the value of the $z$ variables.
This is done by setting $M$ to a large number, forcing $z_{\plan,\alternativeplan}$ to be $1$ iff $c(\plan)=\sum_{\action \in \plan} y_\action \leq \sum_{\action \in \alternativeplan} y_\action=c(\alternativeplan)$.
Similarly, Constraint (\ref{milp:plan_optimality_constraint}) ensures that $x_\plan=1$ iff \plan has a lower or equal cost than the rest of its alternative plans. 

We aim to optimize the objective function described in Equation (\ref{milp:objective_function}), where we have two objectives.
The first objective, weighted by $\omega_1$, aims to maximize the number of optimal plans.
The second objective, weighted by $\omega_2$, aims to minimize the total cost of the actions in the learned model.
As described in Algorithm 1, \optimalapproach{k} will first try to maximize the number of plans that can be turned optimal by setting the weights to $\omega_1=1$ and $\omega_2=0$ (line 2).
Solving the MILP with these weights will give us a cost function $C$ that makes $Q$ plans optimal.
Then, \optimalapproach{k} generates a new MILP with a new constraint (line 5), enforcing that the new solution has to turn exactly $Q$ plans optimal.
This second MILP is solved with $\omega_1=0$ and $\omega_2=1$ in order to find the optimal cost function $C$, which makes $Q$ plans optimal and minimizes the sum of action's costs.
After that, \optimalapproach{k} updates the cost function \costfunction (line 7) by assigning a cost to the actions the MILP does not reason about, i.e., the actions $\actions \setminus \relevantactions$ that do not affect the optimality of the plans in \multipleplans. 
For example, in the case of \mcf solutions, this function assigns the minimum cost ($1$) to these actions.
This updated cost function \costfunction is finally returned by \optimalapproach{k} as the solution to the cost function learning task.
\optimalapproach{k}'s optimality proof can be found in the Appendix.

\section{Evaluation}
We run three different experiments:
\begin{enumerate}
    \item \textbf{Alignment}. First, we test if \optimalapproach{k} can accurately learn cost functions that align with observed behavior, i.e., the input plans. We conduct experiments in a controlled \textsc{grid} navigation domain, where an agent moves in four cardinal directions to reach a target cell. We generate plans with features like avoiding or preferring certain areas and qualitatively assess if the cost functions from \optimalapproach{k} correctly capture this implicit knowledge.

    \item \textbf{Suboptimality}. \optimalapproach{k} is  guaranteed to maximize the number of plans turned optimal when $k=\infty$, a condition that is impractical for most planning tasks. In this experiment we examine how the algorithm's performance deviates from the best possible value as we incrementally increase $k$ in small \textsc{grid} \cfl tasks.

    \item \textbf{Performance and Scalability}. The algorithm's performance is influenced by two main factors: (i) the value of $k$, which represents the number of alternative plans we wish to consider, and (ii) the \cfl size, which refers to the number and size of the input tasks and plans. In this experiment, we assess how \optimalapproach{k} scales as we increase both parameters across various planning domains.
\end{enumerate}

We only report here results when computing \mcf solutions for \cfl tasks due to space constraints.
Results for the other tasks and solution concepts can be found in the Appendix.

\textbf{Reproducibility}. \optimalapproach{k} uses the unordered top-k configuration of \textsc{symk}~\cite{speck2020symbolic} to compute the set of $k$ alternative plans, and solves the resulting MILPs using the CBC solver~\cite{forrest2005cbc}.
We compare \optimalapproach{k} against \baseline, an  algorithm that assigns either (i) the minimum cost ($1$) to all the actions when solving \cfl tasks; or (ii) the cost prescribed by the approximate cost function when solving \cflc.
Both algorithms leverage \textsc{Fast Downward}~\cite{helmert2006fast} translator to get the grounded actions of a planning task.
Experiments were run on AMD EPYC 7R13 CPUs @ 3.6GHz with an 8GB memory limit and a total time limit of $1800$s per algorithm and \cfl task.

\subsection{Alignment}
\begin{figure}
\vspace{-3mm}
    \centering
    \includegraphics[width=0.71\linewidth]{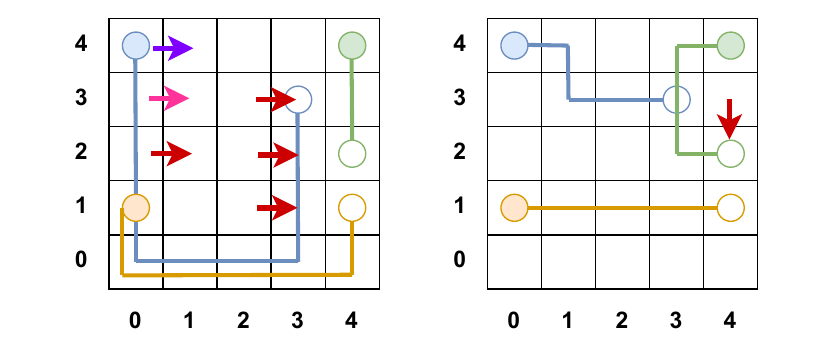}
    \vspace{1mm}
    \caption{Two \cfl tasks in the \textsc{grid} domain, each with $3$ input problems and plans shown in different colors. Filled dots are starting points ($\init$), empty dots are goals ($\goal$), and lines represent plans ($\plan$). The solution assigns costs to actions, with arrows indicating costs above $1$: red for $3$, pink for $5$, and purple for $7$.}
    \vspace{4mm}
    \label{fig:alignment}
\end{figure}
We generate the two \cfl tasks shown in Figure~\ref{fig:alignment}.
The \textsc{grid} domain is a simplified real-world navigation domain, where application owners can gather data on how drivers reach their destinations to infer their preferences. 
In the task on the left, plans aim to avoid the center and top parts of the map, while in the task on the right, plans strive to avoid the outer part of the grid as much as possible. 
This behavior may occur due to traffic jams or construction in these city areas, prompting drivers to choose longer routes to avoid congestion. 
By turning these observed plans optimal, application owners can develop/adjust a model that suggests more useful routes to users.

\optimalapproach{k} effectively turns these input plans optimal.
In the left \cfl task, the \mcf solution provided by \optimalapproach{k} assigns a high cost to moving right through the grid center. 
This adjustment ensures that the blue and orange plans, in particular, become the best options for reaching their final destinations.
For instance, under the returned cost function, 
the blue plan incurs a cost of $10$. This is the same cost as a plan 
that moves right three times and then down, as the first action has a cost of $7$. 
In the right \cfl task, the input plans are shorter and interfere less with each other, so \optimalapproach{k} only needs to increase the cost of one action to ensure that the green plan is optimal.

This example demonstrates how \optimalapproach{k} can create cost functions $C$ that align with the implicit preferences of the input plans. 
In this case, the input plans follow a consistent pattern, enabling \optimalapproach{k} to identify a cost function where all plans are optimal. However, real-world situations often involve input plans from agents with diverse preferences, leading to inconsistencies. To simulate this diversity, we will generate random input plans for the rest of the evaluation.

\subsection{Suboptimality}

We now examine how \optimalapproach{k}'s performance deviates from the best possible value as we incrementally increase $k$ in small $5\times5$ \textsc{grid} instances like the one shown in Figure~\ref{fig:alignment}.
We generate $10$ different problems by varying $\init$ and $\goal$.
Then, we use \textsc{symk} to compute $100$ plans for each planning task $\task_i$.
This represents our pool of $10 \times 100 = 1000$ tuples $\langle \task_i, \plan_i \rangle$. 
We generate $10$ \cfl tasks by randomly selecting $10$ of these tuples.
We evaluate \textsc{baseline} and \optimalapproach{k} using four different input values: $k=10^1, 10^2, 10^3$ and $\infty$.

\begin{table}[h]\label{tab:suboptimal}
    \centering
    \renewcommand{\tabcolsep}{0.06cm}
    \scalebox{0.82}{\begin{tabular}{|c|c|}
     \hline
      & Optimal Plans Ratio \\ \hline
      \textsc{baseline} & \cellcolor{YellowGreen!15}{$0.05 \pm 0.07$}\\ \hline
     $k = 10^1$ & \cellcolor{YellowGreen!15}{$0.07 \pm 0.11$} \\ \hline
      $k = 10^2$&\cellcolor{YellowGreen!45}{$0.53 \pm 0.24$}  \\ \hline
      $k = 10^3$ & \cellcolor{YellowGreen!65}{$0.55 \pm 0.24$}\\ \hline
      $k = \infty$&\cellcolor{YellowGreen!65}{$0.55 \pm 0.24$}\\ \hline
    \end{tabular}}
    \vspace{2mm}
    \caption{ Mean and standard deviation of the ratio of plans turned optimal when the different algorithms compute \mcf solutions in \textsc{grid}.}
    \vspace{3mm}
    %\caption{\mcf solutions represented by mean and standard deviation of optimal plan ratios for five algorithms across four domains, each with 10 \cfl tasks of sizes $10^1$, $10^2$, and $10^3$. Gray cells indicate unsolved problems, and darker shades represent higher ratios.}
\end{table}
Table 2 presents the results of this experiment.
The rows indicate the given algorithm, and the cells display the mean and standard deviation of ratio of input plans made optimal.
This ratio is computed by using the cost function \costfunction returned by each algorithm and using it to solve each of the planning tasks $\task_i \in \multipletasks$.
In the case of \mcf, we verify that optimally solving $\task_i$ with \costfunction yields the same cost as the sum of action's costs of the input plan, i.e., $\sum_{a \in \pi_i} c(a)$.
When this holds, input plan $\pi_i$ optimally solves task $\task_i$ and we annotate a $1$.
Otherwise, we annotate a $0$ meaning that \costfunction does not make $\plan_i$ optimal.
Cells are color-coded to indicate the ratio of optimal plans achieved: lighter (darker) colors represent a lower (higher) ratio of plans turned optimal.
As expected, \optimalapproach{k} achieves the best results when $k=\infty$.
This represents the maximum possible ratio of input plans turned optimal, but it requires computing all alternative plans for each planning task. 
Even in these relatively simple \textsc{grid} tasks, some planning tasks may have around $8,000$ simple plans, taking several hundred seconds to compute. 
Conversely, lower values of $k$ still yield very good results, often close to optimal. 
Specifically, with $k=10^3$, we achieve the same results as with $k=\infty$, and with $k=10^2$, the score is only slightly worse in one \cfl task.

\subsection{Performance and Scalability}
\begin{table*}[h]\label{tab:ratio-optimal}
    \centering
    \renewcommand{\tabcolsep}{0.06cm}
    \scalebox{0.82}{\begin{tabular}{|c|c|c|c|c|c|c|c|c|c|c|c|c|}
     \hline
      & \multicolumn{3}{c|}{\textsc{grid}} & \multicolumn{3}{c|}{\textsc{barman}} & \multicolumn{3}{c|}{\textsc{openstacks}} & \multicolumn{3}{c|}{\textsc{transport}} \\ \hline
      & $|\cfl|=10^1$ & $|\cfl|=10^2$ & $|\cfl|=10^3$ & $|\cfl|=10$ & $|\cfl|=10^2$ & $|\cfl|=10^3$ & $|\cfl|=10^1$ & $|\cfl|=10^2$ & $|\cfl|=10^3$ & $|\cfl|=10^1$ & $|\cfl|=10^2$ & $|\cfl|=10^3$ \\ \hline
      \textsc{baseline} & \cellcolor{YellowGreen!35}{$0.27 \pm 0.10$}&\cellcolor{YellowGreen!55}{$0.46 \pm 0.00$} & \cellcolor{YellowGreen!45}{$0.37 \pm 0.05$} &\cellcolor{YellowGreen!25}{$0.17 \pm 0.08$} &\cellcolor{YellowGreen!25}{$0.18 \pm 0.30$} &\cellcolor{YellowGreen!15}{$0.06 \pm 0.04$} &\cellcolor{YellowGreen!7}{$0.03 \pm 0.05$} & \cellcolor{YellowGreen!7}{$0.01 \pm 0.01$} &\cellcolor{YellowGreen!7}{$0.01 \pm 0.00$}&\cellcolor{YellowGreen!15}{$0.10 \pm 0.07$} &\cellcolor{YellowGreen!7}{$0.02 \pm 0.00$} & \cellcolor{YellowGreen!7}{$0.03 \pm 0.00$} \\ \hline
     $k = 10^1$ & \cellcolor{YellowGreen!45}{$0.43 \pm 0.16$}&\cellcolor{YellowGreen!35}{$0.26 \pm 0.00$} &\cellcolor{YellowGreen!45}{$0.37 \pm 0.05$} & \cellcolor{YellowGreen}{$1.00 \pm 0.00$}&\cellcolor{YellowGreen!65}{$0.65 \pm 0.27$} &\cellcolor{YellowGreen!15}{$0.08 \pm 0.03$} &\cellcolor{YellowGreen!7}{$0.05 \pm 0.10$}&\cellcolor{YellowGreen!7}{$0.00 \pm 0.01$} & \cellcolor{YellowGreen!7}{$0.01 \pm 0.01$}&\cellcolor{YellowGreen!25}{$0.18 \pm 0.08$} &\cellcolor{YellowGreen!7}{$0.03 \pm 0.00$} & \cellcolor{YellowGreen!7}{$0.00 \pm 0.00$} \\ \hline
      $k = 10^2$&\cellcolor{YellowGreen!55}{$0.54 \pm 0.25$} &\cellcolor{YellowGreen!45}{$0.35 \pm 0.00$} &\cellcolor{YellowGreen!45}{$0.37 \pm 0.05$} & \cellcolor{YellowGreen}{$1.00 \pm 0.00$}&\cellcolor{YellowGreen!65}{$0.65 \pm 0.27$} &\cellcolor{YellowGreen!15}{$0.08 \pm 0.03$} &\cellcolor{YellowGreen!25}{$0.19 \pm 0.30$} &\cellcolor{YellowGreen!7}{$0.00 \pm 0.00$} & \cellcolor{YellowGreen!15}{$0.12 \pm 0.19$}&\cellcolor{YellowGreen!25}{$0.16 \pm 0.10$} & \cellcolor{YellowGreen!7}{$0.00 \pm 0.00$}&\cellcolor{YellowGreen!7}{$0.00 \pm 0.00$} \\ \hline
      $k = 10^3$ &\cellcolor{YellowGreen!85}{$0.81 \pm 0.12$} &\cellcolor{YellowGreen!55}{$0.49 \pm 0.00$} &\cellcolor{YellowGreen!45}{$0.37 \pm 0.05$} & \cellcolor{YellowGreen}{$1.00 \pm 0.00$}&\cellcolor{YellowGreen!65}{$0.65 \pm 0.27$} &\cellcolor{YellowGreen!15}{$0.08 \pm 0.03$} &\cellcolor{gray!70}{} &\cellcolor{gray!70}{}& \cellcolor{gray!70}{}&\cellcolor{gray!70}{} & \cellcolor{gray!70} & \cellcolor{gray!70}\\ \hline
      $k = 10^4$&\cellcolor{YellowGreen!85}{$0.78 \pm 0.15$} &\cellcolor{YellowGreen!55}{$0.49 \pm 0.00$} & \cellcolor{YellowGreen!45}{$0.37 \pm 0.05$}&\cellcolor{gray!70}{} &\cellcolor{gray!70}{} &\cellcolor{gray!70}{} &\cellcolor{gray!70}{} &\cellcolor{gray!70}{} &\cellcolor{gray!70}{} &\cellcolor{gray!70}{}& \cellcolor{gray!70}{} & \cellcolor{gray!70}\\ \hline
    \end{tabular}}
    \vspace{2mm}
    \caption{ Mean and standard deviation of the ratio of plans turned optimal when the different algorithms compute \mcf solutions.}
    \vspace{3mm}
    %\caption{\mcf solutions represented by mean and standard deviation of optimal plan ratios for five algorithms across four domains, each with 10 \cfl tasks of sizes $10^1$, $10^2$, and $10^3$. Gray cells indicate unsolved problems, and darker shades represent higher ratios.}
\end{table*}

In the previous experiment we used small instances in the \textsc{grid} domain where \optimalapproach{k} was able to return optimal cost functions $C$ within the time and memory bounds in order to understand how the suboptimality of the algorithm behaves as we increase $k$.
Now we want to understand \optimalapproach{k}'s performance and scalability as we consider larger \cfl tasks across various domains.
We ran experiments in four planning domains: \textsc{barman}, \textsc{openstacks}, \textsc{transport} and \textsc{grid}.
We chose them since (i) we wanted to get a representative yet small set of domains; and (ii) we required domains with existing problem generators that enable us to generate solvable tasks of a controllable size~\cite{seipp2022}.
For each domain, we fix the problem size (\fluents and \actions) and generate $50$ different problems by varying \init and \goal.
The problem sizes were chosen to allow computation of multiple alternatives in reasonable time.
For example, the \textsc{grid} size is $10\times10$, and \textsc{barman} tasks have $3$ ingredients, $3$ cocktails and $4$ shots.
Then, we use \textsc{symk}~\cite{speck2020symbolic} to compute $100$ simple plans for each planning task $\task_i$.
This represents our pool of $50\times100=5\,000$ tuples $\langle \task_i, \plan_i \rangle$ per each domain.
We generate \cfl tasks by randomly selecting $10^1$, $10^2$ or $10^3$ of these tuples from the pool.
For each \cfl task size we generate $10$ random problems, i.e., $10$ different sets of $\langle \task_i, \plan_i \rangle$ tuples, therefore having a total of $3\times 10 = 30$ \cfl tasks  per each domain.
The same tasks are transformed into \cflc tasks by using the cost function in the original planning task as \initcostfunction.

\paragraph{Performance.} Table 3 presents the results of this experiment. 
Each domain contains $10$ \cfl tasks of varying sizes: $10^1, 10^2$ and $10^3$, as displayed in the second row of the Table. These are solved by the following five algorithms: \textsc{baseline}, and \optimalapproach{k} with $k=10^1$, $k=10^2$, $k=10^3$, and $k=10^4$.
As in the suboptimality evaluation, cells display the ratio of plans made optimal for all problem instances commonly solved by at least one algorithm. 
Cells shaded in \textcolor{gray}{gray} indicate that the given algorithm failed to solve any of the $10$ \cfl tasks of that size. The remaining cells with values are color-coded to indicate the ratio of optimal plans achieved: lighter (darker) colors represent a lower (higher) ratio of optimal plans. 

 We identify two main trends. Firstly, \optimalapproach{k} is consistently better than the baseline, and its performance tends to improve as $k$ increases. This improvement is not necessarily monotonic, as we can see in \textsc{Grid} where $k=10^3$ obtains the best results. 
 This is expected, since with any $k$ value other than $\infty$, the MILP is not considering all the alternatives and might be leaving out some of the important ones, i.e., those that can affect the input's plan optimality.
 The ratio of plans turned optimal seems to saturate with low $k$ values, suggesting that in some domains we might not need to compute many alternatives to achieve good results. As the planning tasks grow, \optimalapproach{k} scales worse, with \cfl tasks in \textsc{barman} or \textsc{transport} where it cannot produce solutions within the time bound when $k>10^2$.
 Secondly, as we increase the size of the \cfl tasks, more conflicting problems with redundant actions can arise, reducing the ratio of plans that can be made optimal. For example, in \textsc{Barman}, we can see a gradient in color from left to right as the size of the \cfl tasks increases. In \textsc{Openstacks} and \textsc{Transport}, the problems are more complex, and \textsc{symk} fails to compute alternative plans within the time limit.

\paragraph{Scalability.}
Figure 2 illustrates the total execution time of each algorithm as we increase the \cfl task size.
Executions exceeding $1800$s are shown above the dashed line. 
\textsc{baseline}'s execution time remains constant as the \cfl size increases, being able to return cost functions in less than $10$s in all cases.
On the other hand, \optimalapproach{k} takes more time as more plans need to be turned optimal. For example, while it can solve all but two \cfl task of size $10^1$ in \textsc{grid}, $12$ time out when $|\cfl|=10^2$. 
Increasing $k$ is the factor that affects \optimalapproach{k} the most, as we can see in the linear (logarithmic) increase in the execution time regardless of the \cfl size and domain.
Finally, we analyzed how each component of \optimalapproach{k} affects the total running time.
For lower values of $k$, \textsc{symk} can compute the alternative plans in few seconds, and most of the running time is devoted to the MILP.
On the other hand, when $k\geq10^3$, computing the alternative plans takes most of the time, with problems where \optimalapproach{k} spends the $1800$s running \textsc{symk} with no available time to run the MILP.

\begin{figure*}
    \centering
    \begin{subfigure}[t]{0.26\textwidth}\centering
    \includegraphics[width=\textwidth]{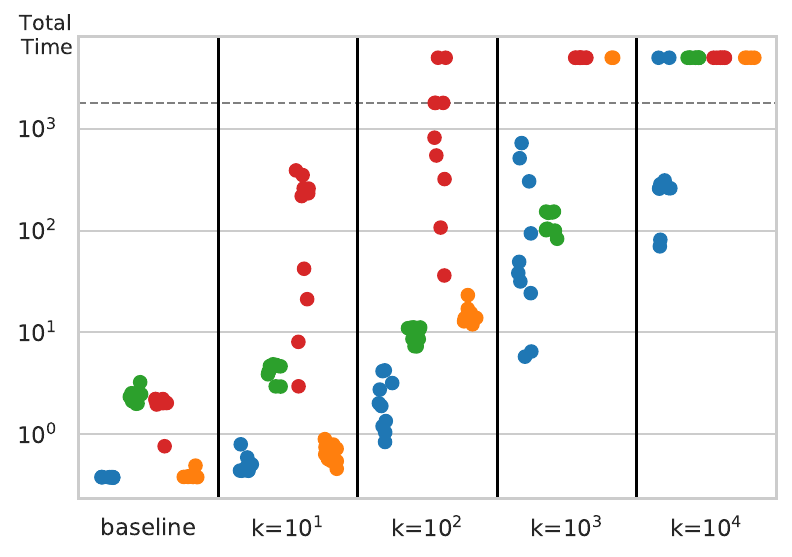}
    \caption{$|\cfl|=10^1$}
    \label{fig:cfl10}
    \end{subfigure}
    \hfill
    \begin{subfigure}[t]{0.26\textwidth}\centering
    \includegraphics[width=\textwidth]{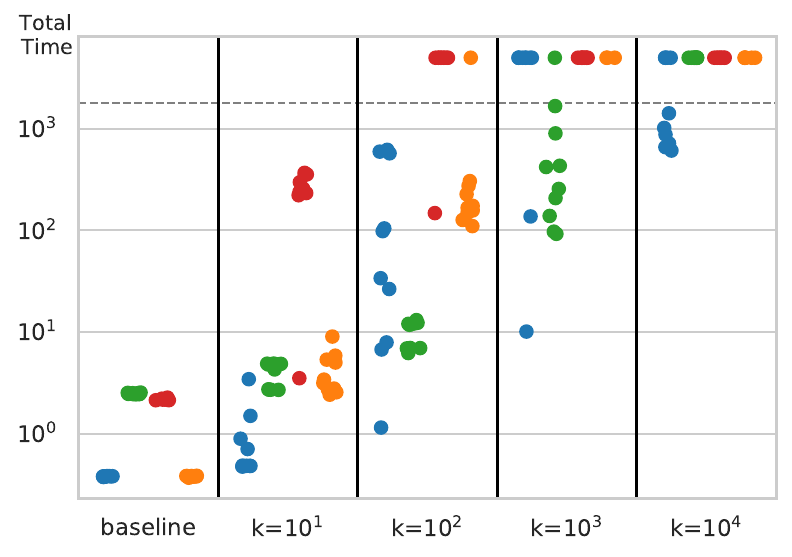}
    \caption{$|\cfl|=10^2$}
    \label{fig:cfl100}
    \end{subfigure}
    \hfill
    \begin{subfigure}[t]{0.26\textwidth}\centering
    \includegraphics[width=\textwidth]{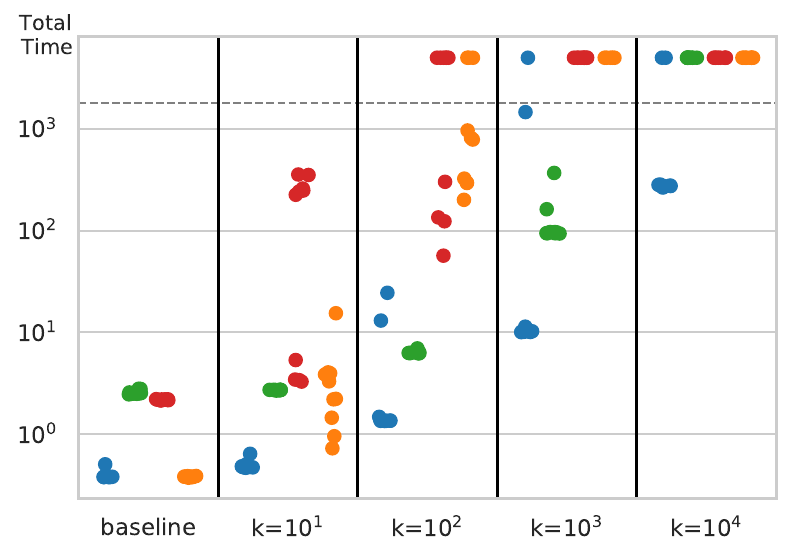}
    \caption{$|\cfl|=10^3$}
    \label{fig:cfl1000}
    \end{subfigure}
    \vspace{4mm}
    \caption{Total execution time for the four domains across different numbers of alternative plans $k$, with each plot corresponding to one of the CFL task sizes. %Execution times over 1800 seconds are marked above the dotted line at 5000 seconds. 
    Domains are represented by different colors: \textcolor{blue}{\textsc{Grid}}, \textcolor{Green}{\textsc{Barman}}, \textcolor{orange}{\textsc{Openstacks}}, and \textcolor{red}{\textsc{Transport}}.}
    \vspace{4mm}
\end{figure*}
\section{Related Work}

\paragraph{Automated Planning.}
Most planning works on domain learning focus on acquiring the action's dynamics  given a set of input plans, often overlooking learning the action's cost model. 
\textsc{nlocm}~\cite{gregory2016domain} and \textsc{lvcp}~\cite{garrido2023learning} are two notable exceptions in the literature. 
Although differing in the input assumptions and guarantees, both approaches require each plan to be annotated with its total cost.
Then, they use constraint programming to assign costs to actions, ensuring their sum equals the total plan cost. 
Our work differs from them in three main aspects.
First, we focus on learning the action's cost model, while \textsc{nlocm} and \textsc{lvcp} can also learn the action's dynamics.
Second, we do not require to know the total cost the input plans. %
Unlike them, \optimalapproach{k} can learn the action costs with minimal knowledge, i.e., just a property shared by the input traces, such as  optimality.
However, if available, \optimalapproach{k} can easily incorporate this information as another constraint to the MILP. 
Third, these approaches have been mainly evaluated on syntactic metrics such as precision and recall of the generated model against the ground truth.
This way of evaluating domain learning success has been lately criticized~\cite{behnke2024wipc,garrido2024weep}, as these metrics do not capture the semantic relationship between the original and learned models.
In our case, we prove both theoretically and empirically how \optimalapproach{k} can learn how to generate optimal plans from the input traces.
The resulting models can then be used to solve novel planning tasks in a way that matches the user preferences, i.e., the observed plans.
The same spirit of learning or improving a model in order to generate plans that matches the real-world/user preferences, is present in \cite{lanchas2007}, where they aim to learn the actions' duration from plan execution.
They use relational regression trees to acquire patterns of the states that affect the actions duration.
While they do not provide any theoretical guarantees on the resulting models, we formally prove that the input plans are optimal under the new cost function.
Moreover, \optimalapproach{k} only focuses on the plans rather than in the intermediate states to learn \costfunction.
Finally, while other works can learn user preferences related to the ordering of actions~\cite{seimetz2021learning,sitanskiy2023learning}, we are limited to learning user preferences that can be represented as a cost function.

\paragraph{Inverse Reinforcement Learning.}
Inverse Reinforcement Learning (IRL)~\cite{ng2000algorithms} is the task of inferring the reward function of an agent given its observed behavior.
We can establish a relationship between learning action costs and IRL by assuming that (i) the input plans are observations of that \emph{agent} acting in the environment; and (ii) action costs are the \emph{reward function} we try to learn.
However, IRL differs from learning action costs in few aspects.
First, it is defined over Markov Decision Processes (MDPs), while we learn the action costs in the context of classical planning. 
Second, most IRL approaches assume experts' observations aim at optimizing a single reward function (goal), which is in stark contrast with our setting where every trace may be associated with a different goal. Among the closest works to our setting in the IRL literature we have \cite{choi2012nonparametric} and \cite{michini2012bayesian}. However, the first one assumes trajectories coming from experts belong to clusters each one with a different underlying reward structure, and the second one assumes reward functions can be represented as the combination of simple sub-goals. In our setting, no such assumptions are required.
Finally, to ensure a trace is optimal in an MDP, one must ensure the action taken at each state is optimal, i.e., it has the highest cumulative discounted reward. Therefore, MILP constraints depend solely on the state and actions~\cite{ng2000algorithms}. Conversely, ensuring a plan is optimal in a planning model requires that no cheaper alternative plan exists, so constraints relate to the entire set of alternative plans.

\section{Conclusions and Future Work}
We introduced a new problem: that of learning the costs of a set of actions such that a set of input plans are optimal under the resulting planning model.
We have formally proved that this problem does not have a solution for an arbitrary set of input plans, and have relaxed the problem to accept solutions where the number of input plans that are turned optimal is maximized.
We have also presented alternative solutions to these tasks, and 
introduced \optimalapproach{k}, a common algorithm to compute solutions to cost function learning tasks. 
Although the theoretical guarantees are only achieved when $k=\infty$, i.e., all the alternative plans are considered, empirical results show that \optimalapproach{k} can achieve good results in few seconds with lower $k$ values.

In future work, we aim to extend our definitions and algorithms to handle bounded suboptimal plans as inputs. 
Although effective, \optimalapproach{k} scales poorly with increasing planning task size and number. 
We also plan to develop more efficient algorithms that prioritize empirical performance over theoretical guarantees. 
Lastly, we want to evaluate how learned cost functions can be used to detect outliers and concept drift in settings where we receive input plans online.

\newpage 

\section*{Disclaimer}
This paper was prepared for informational purposes by the Artificial Intelligence Research group of JPMorgan Chase \& Co. and its affiliates ("JP Morgan'') and is not a product of the Research Department of JP Morgan. JP Morgan makes no representation and warranty whatsoever and disclaims all liability, for the completeness, accuracy or reliability of the information contained herein. This document is not intended as investment research or investment advice, or a recommendation, offer or solicitation for the purchase or sale of any security, financial instrument, financial product or service, or to be used in any way for evaluating the merits of participating in any transaction, and shall not constitute a solicitation under any jurisdiction or to any person, if such solicitation under such jurisdiction or to such person would be unlawful.

\bibliography{ecai25}

\newpage

\onecolumn

\section*{Appendix A - \optimalapproach{k} optimality}

%\begin{theorem}
%    \begin{enumerate}
%        \item Prove that the Algorithm is properly doing lexicographic optimization by doing the two runs with different weights
%        \item Prove that the MILP is properly encoded. If in the first run we are optimizing the number of optimal plans, we need to guarantee that the values of the $x_\pi$ variables are properly enforced. This can be proved by enumerating all the possible configurations of equation* 15. You will ultimately end up relying on the correctness of the $z$ variables. Their value is enforced in equation* 14 (same reasoning as above with the different cases).
%    \end{enumerate}
%\end{theorem}

\begin{theorem}
    \optimalapproach{k} guarantees \mcf optimal solutions for \cfl tasks when $k = \infty$.
\end{theorem}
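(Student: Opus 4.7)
The plan is to prove correctness in three stages that mirror Algorithm~1: (i) argue the MILP of Equations~(\ref{milp:objective_function})--(\ref{milp:z_domain}) faithfully encodes \mcf feasibility when $k=\infty$; (ii) show the two-phase solve correctly realizes the lexicographic maximize-then-minimize objective of Definition~\ref{def:mcfl}; and (iii) handle the extension from \relevantactions to \actions via \textsc{updateCostFunction}.

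First I would analyze the constraints. With $k=\infty$, the enumerated set $\alternativeplans{\plan}$ is exhaustive, so for any cost vector $y$ assigning positive integer costs to actions in \relevantactions, Constraint~(\ref{milp:plan_vs_alternative_optimality_constraint}) with $M$ sufficiently large forces $z_{\plan,\alternativeplan}=0$ whenever $\sum_{\action \in \plan} y_\action > \sum_{\action \in \alternativeplan} y_\action$, and Constraint~(\ref{milp:plan_optimality_constraint}) forces $x_\plan=0$ whenever some $z_{\plan,\alternativeplan}=0$. Conversely, whenever $c(\plan) \leq c(\alternativeplan)$ for every alternative, setting all $z_{\plan,\alternativeplan}=1$ and $x_\plan=1$ is feasible. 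Since Phase~1 uses $\omega_1=1,\omega_2=0$, it strictly rewards $x_\plan=1$, so in any optimizer $x_\plan=1$ holds exactly on the set of input plans that are optimal under the chosen $y$. Hence the returned $Q$ equals $\max_{C}\sum_{i} \textsc{is\_optimal}(\plan_i,\task_i)$ over all cost functions on \relevantactions; by Remark~\ref{remark_actions}, this coincides with the maximum over all cost functions on \actions, matching Equation~(\ref{eq:c_quality1}).

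Second, I would verify the lexicographic step. Phase~2 appends $\sum_{\plan\in\multipleplans} x_\plan = Q$, which restricts the feasible region to cost vectors achieving Phase~1's optimum. The new objective $-\sum_{\action\in\relevantactions} y_\action$ then selects, among these, a vector of minimum total cost on \relevantactions, which is precisely Equation~(\ref{eq:c_quality2}) restricted to the relevant actions. Finally, \textsc{updateCostFunction} assigns cost $1$ to every $\action \in \actions \setminus \relevantactions$; by Remark~\ref{remark_actions} these actions appear neither in any $\plan_i \in \multipleplans$ nor in any $\alternativeplan \in \alternativeplans{\plan_i}$, so all $x_\plan$ values are preserved, and the added contribution $|\actions \setminus \relevantactions|$ is the smallest possible since $\cost(\action)\geq 1$. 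Combining these two minimality statements yields global minimality of $\sum_{\action\in\actions}\cost(\action)$ subject to making $Q$ plans optimal, which is the definition of an optimal \mcf.

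The main obstacle I anticipate is the big-$M$ direction: the inequality in Constraint~(\ref{milp:plan_vs_alternative_optimality_constraint}) technically permits $z_{\plan,\alternativeplan}=1$ even when $c(\plan) > c(\alternativeplan)$ would violate it, unless $M$ is large enough relative to any feasible $\sum y_\action$. I would handle this by arguing that without loss of generality one may bound $y_\action \leq U$ for some $U$ depending only on $|\reachablestates|$ (since making costs arbitrarily large cannot help turn more plans optimal), and then choose $M$ exceeding $U \cdot \max_{\plan}|\plan|$, so that Constraint~(\ref{milp:plan_vs_alternative_optimality_constraint}) is tight in the intended direction. A parallel argument bounds $M$ in Constraint~(\ref{milp:plan_optimality_constraint}) by $|\alternativeplans{\plan}|$. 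With these bounds in place, the equivalence between $x_\plan=1$ and optimality of $\plan$ is unconditional, completing the proof.
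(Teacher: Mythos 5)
Your proposal is correct and follows essentially the same route as the paper's own proof: first verify that Constraints~(\ref{milp:plan_vs_alternative_optimality_constraint})--(\ref{milp:z_domain}) correctly enforce the $z_{\plan,\alternativeplan}$ and $x_\plan$ variables when all alternatives are enumerated ($k=\infty$), then argue that the two-phase solve in lines~3 and~6 of Algorithm~1 realizes the lexicographic objective of Definition~\ref{def:mcfl}. If anything, you are slightly more thorough than the paper, since you also justify the \textsc{updateCostFunction} step via Remark~\ref{remark_actions} and address the required magnitude of the big-$M$ constants, both of which the paper leaves implicit.
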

\begin{proof}
%\marianela{I am missing the fact that $k = \infty$. That's reason why we are considering all the alternatives plans}
    In order to prove the optimality of \optimalapproach{k} algorithm, we aim to show that the algorithm performs lexicographic optimization, and that the MILP is properly encoded.
    To prove that \optimalapproach{k} is properly doing lexicographic optimization, we need to demonstrate that it satisfies Definition~5. Observe that lines 3 and 6 of the algorithm correspond to Equations~(9) and (10) of Definition~5 respectively. In line~3, the MILP is solved by setting $\omega_1 = 1$ and $\omega_2=0$, values which are then substituted into Equation~(12). The optimal plans are calculated by the sum of $x_\plan$, which takes the value $1$ for optimal plans and $0$ for not optimal ones. Solving the MILP with these weights will give us a cost function \costfunction that makes $Q$ plans optimal, and therefore, satisfying Definition~5, Equation~(9). Then, the algorithm generates a new MILP with a new constraint (line~5), enforcing that the new solution has to turn exactly $Q$ plans optimal.
    This second MILP is solved, in line~6, with $\omega_1=0$ and $\omega_2=1$ with the goal to find the optimal cost function \costfunction, which makes $Q$ plans optimal and minimizes the sum of action's costs, i.e., corresponding to Equation~(10). 

    We now have to check that the MILP is properly encoded. As mentioned earlier, line~3 is responsible for maximizing the number of optimal plans. It is essential to ensure that the values of the $x_\plan$ variables are properly enforced. This strictly depends on Equation~(14). Observe that since we have $k = \infty$, then we are considering all the alternative plans \alternativeplans{\plan}. The left side of the inequality relies on the value obtained from the sum of the $z_{\plan,\alternativeplan}$ variables, which will be $1$ if plan $\plan$ has a lower or equal cost compared to the alternative plan $\alternativeplan$, and $0$ otherwise. We can get one of the following two cases:
    
    If $\plan$ is an optimal plan, then the sum of the $z$ variables will be equal to $|\alternativeplans{\plan}|$, making the left side of the inequality in Equation~(14) is equal to $0$. %, which has the following cases:
    To satisfy the inequality, $x_\plan$ can be either $1$ or $0$. However, since the goal is to maximize the number of optimal plans (Equation~(12)), the preferred value for $x_\plan$ is $1$.

    If $\plan$ is not an optimal plan, there exists at least one alternative plan $\alternativeplan$ with a lower or equal cost than $\plan$. Consequently, the left side of the inequality in Equation~(14) results in a positive number greater than $0$. To satisfy the complete inequality, the right side must be greater than or equal to this positive number. This result is achieved by setting $x_\plan$ to $0$ (which aligns with the fact that $\plan$ is not optimal).

    Observe that in order to guarantee that the values of the $x_\plan$ variables are properly enforced, we rely on the correctness of the $z$ variables, as specified in Equation~(13). We consider the following cases:

    If $\plan$ is an optimal plan, then the sum of the actions' costs of $\plan$ is lower than or equal to the sum of the actions' costs of $\alternativeplan$. Regardless of the value that $z_{\plan,\alternativeplan}$ takes, the inequality holds. However, $z_{\plan,\alternativeplan}$ will preferably be $1$ in order to maximize the sum in Equation~(14), and therefore, maximize the number of optimal plans as intended by Equation~(12).

    If $\plan$ is not an optimal plan, for at least one alternative plan $\alternativeplan$, the sum of the actions' costs of $\plan$ is greater than the actions' costs of $\alternativeplan$. To ensure that the inequality in Equation~(13) holds, the correct value for $z_{\plan,\alternativeplan}$ is equal to $0$. This, in turn, influences Equation~(14), ensuring that $x_\plan$ is set correctly and the number of optimal plans in maximized in Equation~(12).
\end{proof}

\begin{remark}
    Note that \optimalapproach{k} can also find \mcf optimal solutions when $k<\infty$.
    However, this is not guaranteed, as the MILP will only ensure each input plan is less costly than a subset of the alternative plans \alternativeplans{\plan}.
\end{remark}

\newpage

\section*{Appendix B - Remaining MILPs}
\subsection*{\textbf{\scf}}

\begin{equation*}
    \small
    \text{maximize} \quad \omega_1 \displaystyle\sum\limits_{\plan \in \multipleplans} x_{\plan} - \omega_2 \displaystyle\sum\limits_{\action \in \relevantactions} y_{\action}
\end{equation*}
\begin{equation*}
    \small
    \text{s.t.} \displaystyle\sum\limits_{\action \in \plan}  y_{\action} + 1 \leq \displaystyle\sum_{\action \in \alternativeplan} y_{\action} + M(1-z_{\plan,\alternativeplan}), \quad  \plan \in \multipleplans, \alternativeplan \in \alternativeplans{\plan}
\end{equation*}
\begin{equation*}
    \small
    |\alternativeplans{\plan}| - \displaystyle\sum_{\alternativeplan \in \alternativeplans{\plan}} z_{\plan,\alternativeplan}   \leq M(1-x_{\plan}), \quad  \plan \in \multipleplans
\end{equation*}
\begin{equation*}
    \small
    x_{\plan}            \in \{0,1\}, \quad \plan \in \multipleplans
\end{equation*}
\begin{equation*}
    \small
    y_{\action}        \geq 1, \quad \action \in \relevantactions
\end{equation*}
\begin{equation*}
    \small
     z_{\plan,\alternativeplan}        \in \{0,1\}, \quad \plan \in \multipleplans, \alternativeplan \in \alternativeplans{\plan}
\end{equation*}

\subsection*{\textbf{\mcfc}}

\begin{equation*}
    \small
    \text{maximize} \quad \omega_1 \displaystyle\sum\limits_{\plan \in \multipleplans} x_{\plan} - \omega_2 \displaystyle\sum\limits_{\action \in \relevantactions} d_{\action}
\end{equation*}
\begin{equation*}
    \small
    \text{s.t.} \displaystyle\sum\limits_{\action \in \plan}  y_{\action} \leq \displaystyle\sum_{\action \in \alternativeplan} y_{\action} + M(1-z_{\plan,\alternativeplan}), \quad  \plan \in \multipleplans, \alternativeplan \in \alternativeplans{\plan}
\end{equation*}
\begin{equation*}
    \small
    |\alternativeplans{\plan}| - \displaystyle\sum_{\alternativeplan \in \alternativeplans{\plan}} z_{\plan,\alternativeplan}   \leq M(1-x_{\plan}), \quad  \plan \in \multipleplans
\end{equation*}
\begin{equation*}
    \small
    \initcostfunction(\action) - y_{\action} \leq d_{\action}, \quad \action \in \relevantactions
\end{equation*}
\begin{equation*}
    \small
    y_{\action} - \initcostfunction(\action) \leq d_{\action}, \quad \action \in \relevantactions
\end{equation*}
\begin{equation*}
    \small
    x_{\plan}            \in \{0,1\}, \quad \plan \in \multipleplans
\end{equation*}
\begin{equation*}
    \small
    y_{\action}        \geq 1, \quad \action \in \relevantactions
\end{equation*}
\begin{equation*}
    \small
    d_{\action}        \geq 0, \quad \action \in \relevantactions
\end{equation*}
\begin{equation*}
    \small
     z_{\plan,\alternativeplan}        \in \{0,1\}, \quad \plan \in \multipleplans, \alternativeplan \in \alternativeplans{\plan}
\end{equation*}

\subsection*{\textbf{\scfc}}

\begin{equation*}
    \small
    \text{maximize} \quad \omega_1 \displaystyle\sum\limits_{\plan \in \multipleplans} x_{\plan} - \omega_2 \displaystyle\sum\limits_{\action \in \relevantactions} d_{\action}
\end{equation*}
\begin{equation*}
    \small
    \text{s.t.} \displaystyle\sum\limits_{\action \in \plan}  y_{\action} + 1 \leq \displaystyle\sum_{\action \in \alternativeplan} y_{\action} + M(1-z_{\plan,\alternativeplan}), \quad  \plan \in \multipleplans, \alternativeplan \in \alternativeplans{\plan}
\end{equation*}
\begin{equation*}
    \small
    |\alternativeplans{\plan}| - \displaystyle\sum_{\alternativeplan \in \alternativeplans{\plan}} z_{\plan,\alternativeplan}   \leq M(1-x_{\plan}), \quad  \plan \in \multipleplans
\end{equation*}
\begin{equation*}
    \small
    \initcostfunction(\action) - y_{\action} \leq d_{\action}, \quad \action \in \relevantactions
\end{equation*}
\begin{equation*}
    \small
    y_{\action} - \initcostfunction(\action) \leq d_{\action}, \quad \action \in \relevantactions
\end{equation*}
\begin{equation*}
    \small
    x_{\plan}            \in \{0,1\}, \quad \plan \in \multipleplans
\end{equation*}
\begin{equation*}
    \small
    y_{\action}        \geq 1, \quad \action \in \relevantactions
\end{equation*}
\begin{equation*}
    \small
    d_{\action}        \geq 0, \quad \action \in \relevantactions
\end{equation*}
\begin{equation*}
    \small
     z_{\plan,\alternativeplan}        \in \{0,1\}, \quad \plan \in \multipleplans, \alternativeplan \in \alternativeplans{\plan}
\end{equation*}

\newpage

\section*{Appendix C - Experiments}
\begin{table*}[h]\label{tab:ratio-optimal-scf}
    \centering
    \renewcommand{\tabcolsep}{0.06cm}
    \scalebox{0.7}{\begin{tabular}{|c|c|c|c|c|c|c|c|c|c|c|c|c|}
     \hline
      & \multicolumn{3}{c|}{\textsc{grid}} & \multicolumn{3}{c|}{\textsc{barman}} & \multicolumn{3}{c|}{\textsc{openstacks}} & \multicolumn{3}{c|}{\textsc{transport}} \\ \hline
      & $|\cfl|=10^1$ & $|\cfl|=10^2$ & $|\cfl|=10^3$ & $|\cfl|=10$ & $|\cfl|=10^2$ & $|\cfl|=10^3$ & $|\cfl|=10^1$ & $|\cfl|=10^2$ & $|\cfl|=10^3$ & $|\cfl|=10^1$ & $|\cfl|=10^2$ & $|\cfl|=10^3$ \\ \hline
      \textsc{baseline} &\cellcolor{YellowGreen!7}{$0.02 \pm 0.04$}&\cellcolor{YellowGreen!7}{$0.04 \pm 0.05$}&\cellcolor{YellowGreen!25}{$0.15 \pm 0.13$}&\cellcolor{YellowGreen!7}{$0.00 \pm 0.00$}&\cellcolor{gray!70}{}&\cellcolor{YellowGreen!7}{$0.00 \pm 0.00$}&\cellcolor{YellowGreen!7}{$0.00 \pm 0.00$}&\cellcolor{YellowGreen!7}{$0.00 \pm 0.00$}&\cellcolor{YellowGreen!7}{$0.00 \pm 0.00$}&\cellcolor{YellowGreen!7}{$0.00 \pm 0.00$}&\cellcolor{YellowGreen!7}{$0.00 \pm 0.00$}&\cellcolor{YellowGreen!7}{$0.00 \pm 0.00$}\\ \hline
      
      $k = 10^1$ &\cellcolor{YellowGreen!15}{$0.08 \pm 0.12$}&\cellcolor{YellowGreen!7}{$0.03 \pm 0.05$}&\cellcolor{YellowGreen!15}{$0.14 \pm 0.10$}&\cellcolor{YellowGreen!7}{$0.00 \pm 0.00$}&\cellcolor{gray!70}{}&\cellcolor{YellowGreen!7}{$0.00 \pm 0.00$}&\cellcolor{YellowGreen!7}{$0.00 \pm 0.00$}&\cellcolor{YellowGreen!7}{$0.00 \pm 0.00$}&\cellcolor{YellowGreen!7}{$0.00 \pm 0.00$}&\cellcolor{YellowGreen!7}{$0.00 \pm 0.00$}&\cellcolor{YellowGreen!15}{$0.10 \pm 0.30$}& \cellcolor{YellowGreen!7}{$0.00 \pm 0.00$}\\ \hline
      
      $k = 10^2$&\cellcolor{YellowGreen!55}{$0.50 \pm 0.25$}&\cellcolor{gray!70}{}&\cellcolor{YellowGreen!15}{$0.14 \pm 0.10$}&\cellcolor{YellowGreen!7}{$0.00 \pm 0.00$}&\cellcolor{gray!70}{}&\cellcolor{YellowGreen!7}{$0.00 \pm 0.00$}&\cellcolor{YellowGreen!7}{$0.00 \pm 0.00$}&\cellcolor{YellowGreen!7}{$0.00 \pm 0.00$}&\cellcolor{YellowGreen!7}{$0.00 \pm 0.00$}&\cellcolor{YellowGreen!7}{$0.00 \pm 0.00$}&\cellcolor{gray!70}&\cellcolor{gray!70} \\ \hline
      
      $k = 10^3$ &\cellcolor{YellowGreen!85}{$0.80 \pm 0.11$}&\cellcolor{gray!70}{}&\cellcolor{gray!70}{}&\cellcolor{YellowGreen!7}{$0.00 \pm 0.00$}&\cellcolor{gray!70}{}&\cellcolor{YellowGreen!7}{$0.00 \pm 0.00$}&\cellcolor{gray!70}{}&\cellcolor{gray!70}{}&\cellcolor{gray!70}{}&\cellcolor{gray!70}{}&\cellcolor{gray!70}&\cellcolor{YellowGreen!7}{$0.00 \pm 0.00$}\\ \hline
      
      $k = 10^4$&\cellcolor{YellowGreen!85}{$0.76 \pm 0.20$}&\cellcolor{YellowGreen!35}{$0.32 \pm 0.07$}&\cellcolor{gray!70}{}&\cellcolor{gray!70}{}&\cellcolor{gray!70}{}&\cellcolor{gray!70}{}&\cellcolor{gray!70}{}&\cellcolor{gray!70}{}&\cellcolor{gray!70}{}&\cellcolor{gray!70}{}&\cellcolor{gray!70}& \cellcolor{gray!70}\\ \hline
    \end{tabular}}
    \caption{ Mean and standard deviation of the ratio of plans turned optimal when the different algorithms compute \scf solutions.}
\end{table*}
\begin{table*}[h]\label{tab:ratio-optimal-mcfc}
    \centering
    \renewcommand{\tabcolsep}{0.06cm}
    \scalebox{0.7}{\begin{tabular}{|c|c|c|c|c|c|c|c|c|c|c|c|c|}
     \hline
      & \multicolumn{3}{c|}{\textsc{grid}} & \multicolumn{3}{c|}{\textsc{barman}} & \multicolumn{3}{c|}{\textsc{openstacks}} & \multicolumn{3}{c|}{\textsc{transport}} \\ \hline
      & $|\cfl|=10^1$ & $|\cfl|=10^2$ & $|\cfl|=10^3$ & $|\cfl|=10$ & $|\cfl|=10^2$ & $|\cfl|=10^3$ & $|\cfl|=10^1$ & $|\cfl|=10^2$ & $|\cfl|=10^3$ & $|\cfl|=10^1$ & $|\cfl|=10^2$ & $|\cfl|=10^3$ \\ \hline
       \textsc{baseline} &\cellcolor{YellowGreen!45}{$0.40 \pm 0.22$} &\cellcolor{YellowGreen!55}{$0.46 \pm 0.00$} &\cellcolor{YellowGreen!45}{$0.37 \pm 0.05$} &\cellcolor{YellowGreen!25}{$0.20 \pm 0.00$}&\cellcolor{YellowGreen!25}{$0.18 \pm 0.30$}&\cellcolor{YellowGreen!15}{$0.06 \pm 0.04$}&\cellcolor{YellowGreen!7}{$0.03 \pm 0.05$}&\cellcolor{YellowGreen!7}{$0.01 \pm 0.01$}&\cellcolor{YellowGreen!7}{$0.01 \pm 0.00$}&\cellcolor{YellowGreen!15}{$0.07 \pm 0.05$}&\cellcolor{YellowGreen!7}{$0.02 \pm 0.00$}&\cellcolor{YellowGreen!7}{$0.03 \pm 0.00$}\\ \hline
      
      $k = 10^1$ &\cellcolor{YellowGreen!45}{$0.43 \pm 0.24$}&\cellcolor{YellowGreen!35}{$0.26 \pm 0.00$} &\cellcolor{YellowGreen!45}{$0.37 \pm 0.05$} &\cellcolor{YellowGreen}{$1.00 \pm 0.00$}&\cellcolor{YellowGreen!65}{$0.65 \pm 0.27$}&\cellcolor{YellowGreen!15}{$0.08 \pm 0.03$}&\cellcolor{YellowGreen!7}{$0.00 \pm 0.00$}&\cellcolor{YellowGreen!7}{$0.00 \pm 0.00$}&\cellcolor{YellowGreen!7}{$0.00 \pm 0.00$}&\cellcolor{YellowGreen!15}{$0.12 \pm 0.04$}&\cellcolor{YellowGreen!15}{$0.07 \pm 0.00$}&\cellcolor{YellowGreen!7}{$0.00 \pm 0.00$}\\ \hline
      
      $k = 10^2$&\cellcolor{YellowGreen!75}{$0.73 \pm 0.11$} & \cellcolor{YellowGreen!45}{$0.35 \pm 0.00$}&\cellcolor{YellowGreen!45}{$0.37 \pm 0.05$} &\cellcolor{YellowGreen}{$1.00 \pm 0.00$}&\cellcolor{YellowGreen!65}{$0.65 \pm 0.27$}&\cellcolor{YellowGreen!15}{$0.08 \pm 0.03$}&\cellcolor{YellowGreen!7}{$0.00 \pm 0.00$}&\cellcolor{YellowGreen!7}{$0.00 \pm 0.00$}&\cellcolor{YellowGreen!7}{$0.00 \pm 0.00$}&\cellcolor{YellowGreen!35}{$0.28 \pm 0.15$}&\cellcolor{YellowGreen!15}{$0.13 \pm 0.00$}&\cellcolor{YellowGreen!15}{$0.07 \pm 0.00$} \\ \hline
      
      $k = 10^3$ &\cellcolor{YellowGreen!85}{$0.83 \pm 0.07$} &\cellcolor{YellowGreen!55}{$0.49 \pm 0.00$} &\cellcolor{YellowGreen!45}{$0.37 \pm 0.05$} &\cellcolor{YellowGreen}{$1.00 \pm 0.00$}&\cellcolor{YellowGreen!65}{$0.65 \pm 0.27$}&\cellcolor{YellowGreen!15}{$0.08 \pm 0.03$}&\cellcolor{gray!70}{}&\cellcolor{gray!70}{}&\cellcolor{gray!70}{}&\cellcolor{gray!70}{}&\cellcolor{gray!70}{}&\cellcolor{gray!70}{} \\ \hline
      
      $k = 10^4$&\cellcolor{YellowGreen!65}{$0.60 \pm 0.40$}&\cellcolor{YellowGreen!55}{$0.49 \pm 0.00$} &\cellcolor{YellowGreen!45}{$0.37 \pm 0.05$} &\cellcolor{YellowGreen}{$1.00 \pm 0.00$}&\cellcolor{gray!70}{}&\cellcolor{gray!70}{}&\cellcolor{gray!70}{}&\cellcolor{gray!70}{}&\cellcolor{gray!70}{}&\cellcolor{gray!70}{}&\cellcolor{gray!70}{}&\cellcolor{gray!70}{}\\ \hline
    \end{tabular}}
    \caption{ Mean and standard deviation of the ratio of plans turned optimal when the different algorithms compute \mcfc solutions.}
\end{table*}
\begin{table*}[h]\label{tab:ratio-optimal-mcfc}
    \centering
    \renewcommand{\tabcolsep}{0.06cm}
    \scalebox{0.7}{\begin{tabular}{|c|c|c|c|c|c|c|c|c|c|c|c|c|}
     \hline
      & \multicolumn{3}{c|}{\textsc{grid}} & \multicolumn{3}{c|}{\textsc{barman}} & \multicolumn{3}{c|}{\textsc{openstacks}} & \multicolumn{3}{c|}{\textsc{transport}} \\ \hline
      & $|\cfl|=10^1$ & $|\cfl|=10^2$ & $|\cfl|=10^3$ & $|\cfl|=10$ & $|\cfl|=10^2$ & $|\cfl|=10^3$ & $|\cfl|=10^1$ & $|\cfl|=10^2$ & $|\cfl|=10^3$ & $|\cfl|=10^1$ & $|\cfl|=10^2$ & $|\cfl|=10^3$ \\ \hline
      \textsc{baseline} &\cellcolor{YellowGreen!7}{$0.03 \pm 0.04$}&\cellcolor{YellowGreen!7}{$0.00 \pm 0.00$}&\cellcolor{YellowGreen!25}{$0.21 \pm 0.15$}&\cellcolor{YellowGreen!7}{$0.00 \pm 0.00$}&\cellcolor{gray!70}{}&\cellcolor{YellowGreen!7}{$0.00 \pm 0.00$}&\cellcolor{YellowGreen!7}{$0.00 \pm 0.00$}&\cellcolor{YellowGreen!7}{$0.00 \pm 0.00$}&\cellcolor{YellowGreen!7}{$0.00 \pm 0.00$}&\cellcolor{YellowGreen!7}{$0.00 \pm 0.00$}&\cellcolor{YellowGreen!7}{$0.00 \pm 0.00$}&\cellcolor{YellowGreen!7}{$0.00 \pm 0.00$}\\ \hline
      
      $k = 10^1$ &\cellcolor{YellowGreen!15}{$0.07 \pm 0.13$}&\cellcolor{YellowGreen!7}{$0.00 \pm 0.00$}&\cellcolor{YellowGreen!65}{$0.57 \pm 0.43$}&\cellcolor{YellowGreen!7}{$0.00 \pm 0.00$}&\cellcolor{gray!70}{}&\cellcolor{YellowGreen!7}{$0.00 \pm 0.00$}&\cellcolor{YellowGreen!7}{$0.00 \pm 0.00$}&\cellcolor{YellowGreen!25}{$0.20 \pm 0.40$}&\cellcolor{YellowGreen!7}{$0.00 \pm 0.00$}&\cellcolor{YellowGreen!7}{$0.02 \pm 0.04$}&\cellcolor{YellowGreen!15}{$0.10 \pm 0.30$}&\cellcolor{YellowGreen!45}{$0.42 \pm 0.47$}\\ \hline
      
      $k = 10^2$&\cellcolor{YellowGreen!65}{$0.55 \pm 0.18$}&\cellcolor{gray!70}{}&\cellcolor{YellowGreen!25}{$0.19 \pm 0.11$}&\cellcolor{YellowGreen!7}{$0.00 \pm 0.00$}&\cellcolor{gray!70}{}&\cellcolor{YellowGreen!7}{$0.00 \pm 0.00$}&\cellcolor{YellowGreen!15}{$0.10 \pm 0.30$}&\cellcolor{YellowGreen!25}{$0.20 \pm 0.40$}&\cellcolor{YellowGreen!15}{$0.14 \pm 0.35$}&\cellcolor{YellowGreen!25}{$0.22 \pm 0.07$}&\cellcolor{gray!70}{}&\cellcolor{gray!70}{} \\ \hline
      
      $k = 10^3$ &\cellcolor{YellowGreen!85}{$0.80 \pm 0.07$}&\cellcolor{YellowGreen!25}{$0.19 \pm 0.00$}&\cellcolor{gray!70}{}&\cellcolor{YellowGreen!7}{$0.00 \pm 0.00$}&\cellcolor{gray!70}{}&\cellcolor{YellowGreen!7}{$0.00 \pm 0.00$}&\cellcolor{gray!70}{}&\cellcolor{gray!70}{}&\cellcolor{gray!70}{}&\cellcolor{gray!70}{}&\cellcolor{gray!70}{}&\cellcolor{gray!70}{} \\ \hline
      
      $k = 10^4$&\cellcolor{YellowGreen!85}{$0.80 \pm 0.00$}&\cellcolor{YellowGreen!15}{$0.08 \pm 0.00$}&\cellcolor{gray!70}{}&\cellcolor{gray!70}{}&\cellcolor{gray!70}{}&\cellcolor{gray!70}{}&\cellcolor{gray!70}{}&\cellcolor{gray!70}{}&\cellcolor{gray!70}{}&\cellcolor{gray!70}{}&\cellcolor{gray!70}{}&\cellcolor{gray!70}{}\\ \hline
    \end{tabular}}
    \caption{ Mean and standard deviation of the ratio of plans turned optimal when the different algorithms compute \scfc solutions.}
\end{table*}
\newpage
\begin{figure*}
    \centering
    \begin{subfigure}[t]{0.33\textwidth}\centering
    \includegraphics[width=\textwidth]{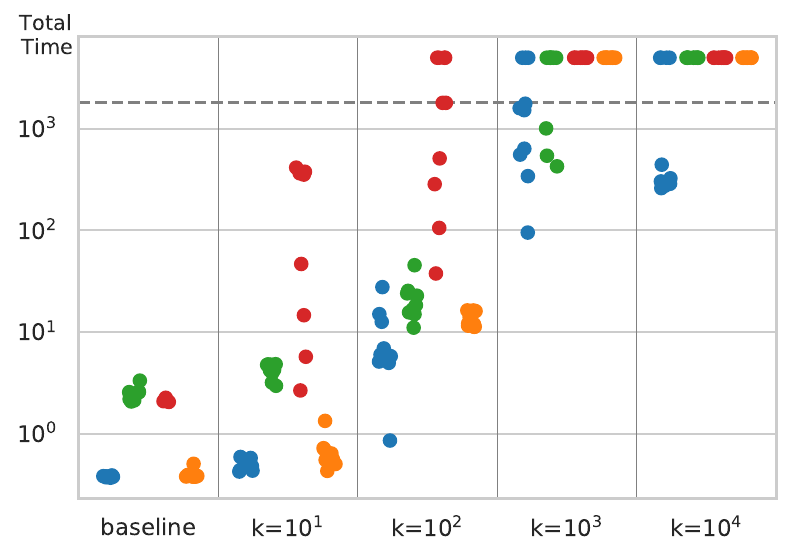}
    \caption{$|\cfl|=10^1$}
    \label{fig:cfl10}
    \end{subfigure}
    \hfill
    \begin{subfigure}[t]{0.33\textwidth}\centering
    \includegraphics[width=\textwidth]{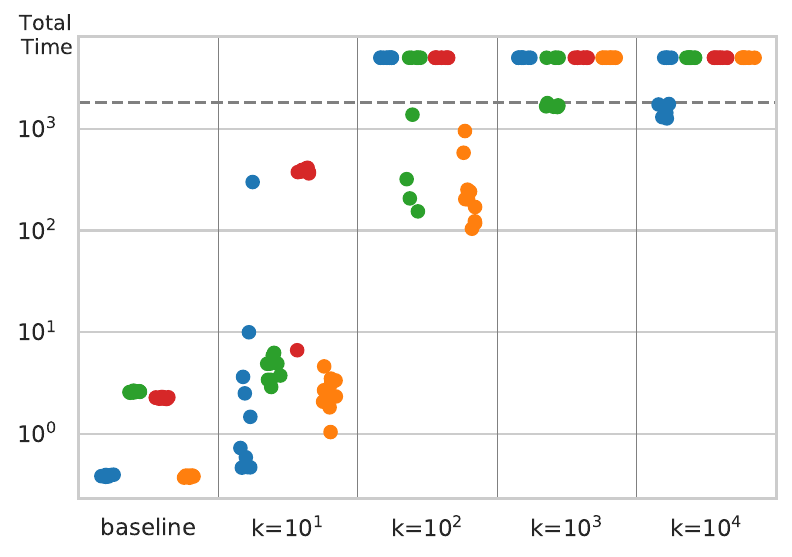}
    \caption{$|\cfl|=10^2$}
    \label{fig:cfl100}
    \end{subfigure}
    \hfill
    \begin{subfigure}[t]{0.33\textwidth}\centering
    \includegraphics[width=\textwidth]{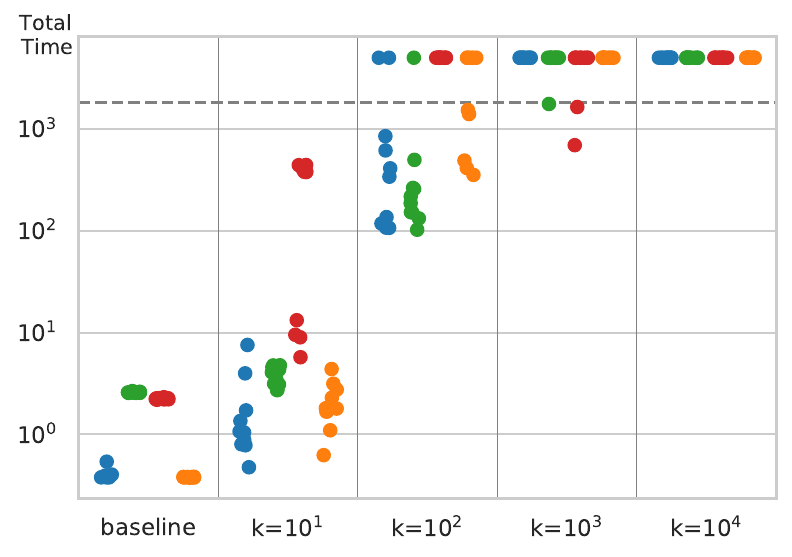}
    \caption{$|\cfl|=10^3$}
    \label{fig:cfl1000}
    \end{subfigure}
    \vspace{3mm}
    \caption{Total execution time of the algorithms computing \scf solutions for the four domains across different numbers of alternative plans $k$, with each plot corresponding to one of the CFL task sizes. %Execution times over 1800 seconds are marked above the dotted line at 5000 seconds. 
    Domains are represented by different colors: \textcolor{blue}{\textsc{Grid}}, \textcolor{Green}{\textsc{Barman}}, \textcolor{orange}{\textsc{Openstacks}}, and \textcolor{red}{\textsc{Transport}}.}
\end{figure*}
\begin{figure*}
    \centering
    \begin{subfigure}[t]{0.33\textwidth}\centering
    \includegraphics[width=\textwidth]{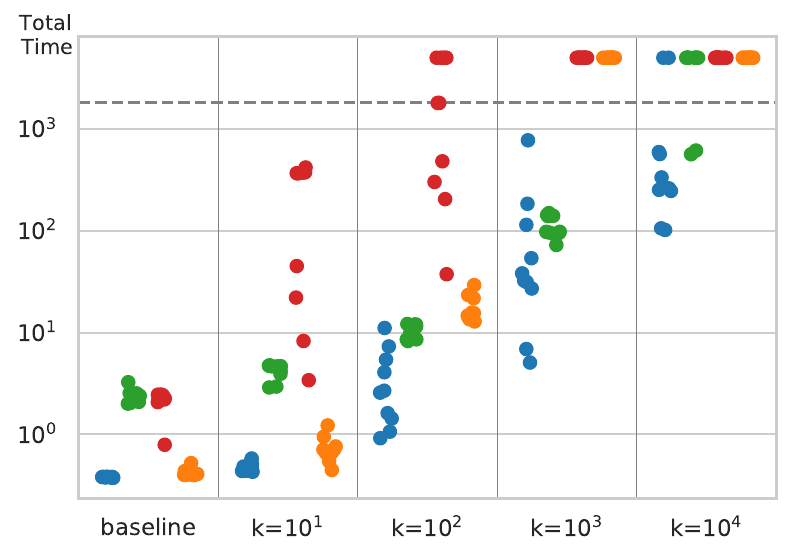}
    \caption{$|\cfl|=10^1$}
    \label{fig:cfl10}
    \end{subfigure}
    \hfill
    \begin{subfigure}[t]{0.33\textwidth}\centering
    \includegraphics[width=\textwidth]{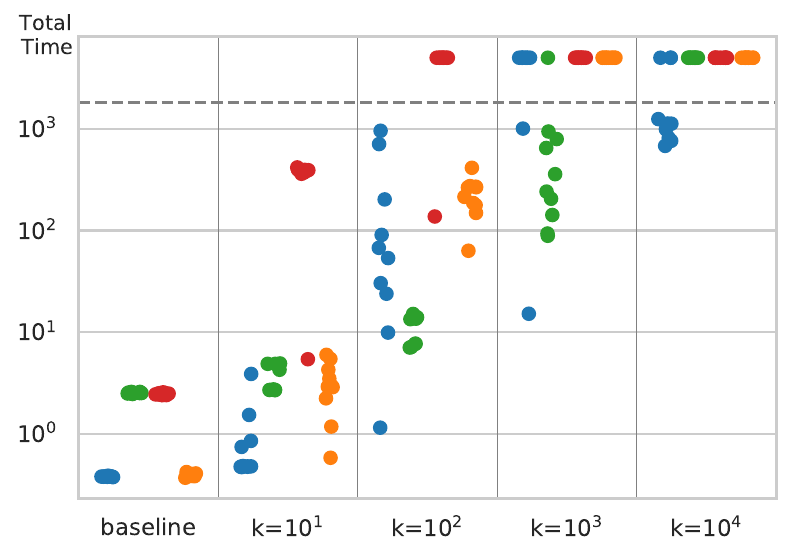}
    \caption{$|\cfl|=10^2$}
    \label{fig:cfl100}
    \end{subfigure}
    \hfill
    \begin{subfigure}[t]{0.33\textwidth}\centering
    \includegraphics[width=\textwidth]{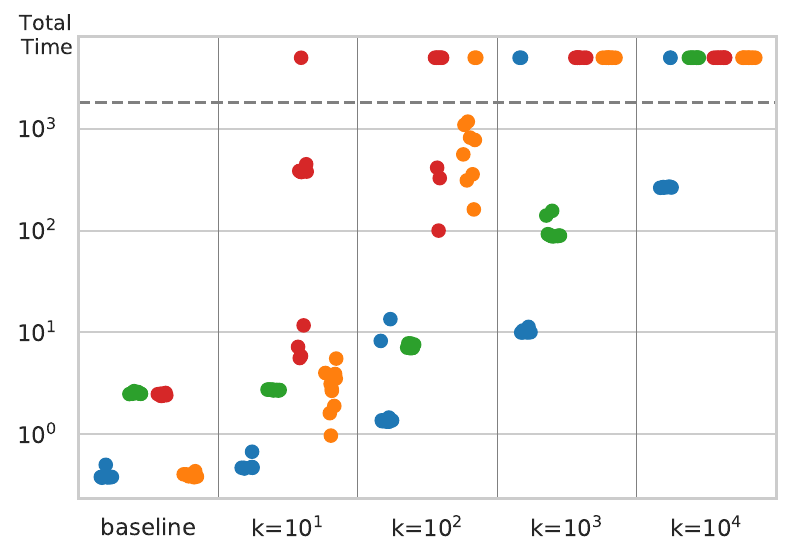}
    \caption{$|\cfl|=10^3$}
    \label{fig:cfl1000}
    \end{subfigure}
    \vspace{3mm}
    \caption{Total execution time of the algorithms computing \mcfc solutions for the four domains across different numbers of alternative plans $k$, with each plot corresponding to one of the CFL task sizes. %Execution times over 1800 seconds are marked above the dotted line at 5000 seconds. 
    Domains are represented by different colors: \textcolor{blue}{\textsc{Grid}}, \textcolor{Green}{\textsc{Barman}}, \textcolor{orange}{\textsc{Openstacks}}, and \textcolor{red}{\textsc{Transport}}.}
\end{figure*}
\begin{figure*}
    \centering
    \begin{subfigure}[t]{0.33\textwidth}\centering
    \includegraphics[width=\textwidth]{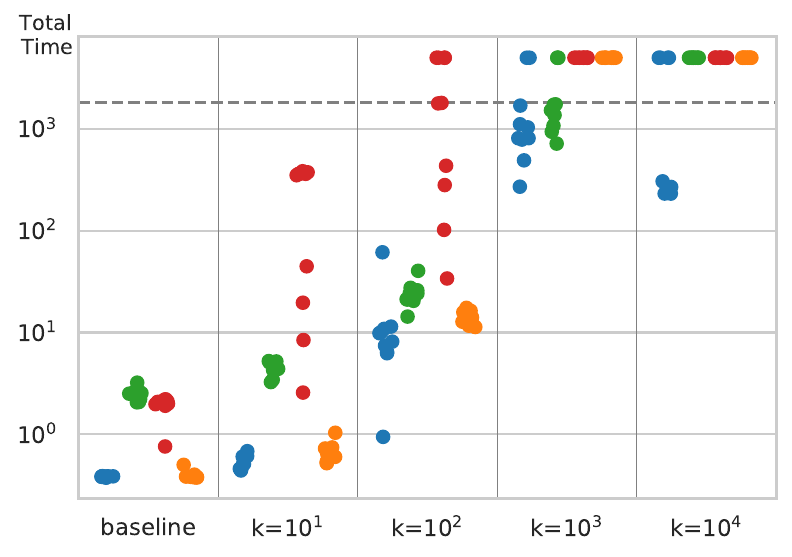}
    \caption{$|\cfl|=10^1$}
    \label{fig:cfl10}
    \end{subfigure}
    \hfill
    \begin{subfigure}[t]{0.33\textwidth}\centering
    \includegraphics[width=\textwidth]{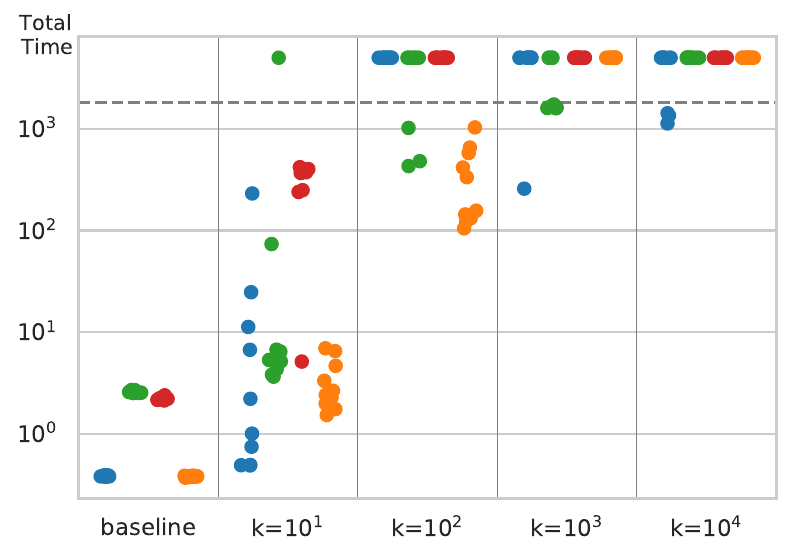}
    \caption{$|\cfl|=10^2$}
    \label{fig:cfl100}
    \end{subfigure}
    \hfill
    \begin{subfigure}[t]{0.33\textwidth}\centering
    \includegraphics[width=\textwidth]{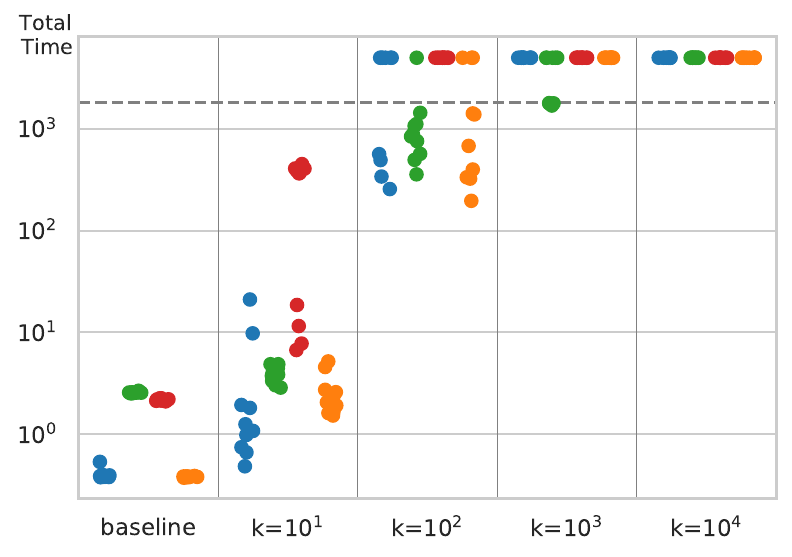}
    \caption{$|\cfl|=10^3$}
    \label{fig:cfl1000}
    \end{subfigure}
    \vspace{3mm}
    \caption{Total execution time of the algorithms computing \scfc solutions for the four domains across different numbers of alternative plans $k$, with each plot corresponding to one of the CFL task sizes. %Execution times over 1800 seconds are marked above the dotted line at 5000 seconds. 
    Domains are represented by different colors: \textcolor{blue}{\textsc{Grid}}, \textcolor{Green}{\textsc{Barman}}, \textcolor{orange}{\textsc{Openstacks}}, and \textcolor{red}{\textsc{Transport}}.}
\end{figure*}

\end{document}